\newtheorem{theorem}{Theorem}[section]
\newtheorem{lemma}[theorem]{Lemma}
\theoremstyle{remark}
\newtheorem*{theorem*}{Theorem}
\title{Advancing Machine-Generated Text Detection from an Easy to Hard Supervision Perspective}
\author{%
  Chenwang~Wu$^1$
  \ \ \ \ \ \
  Yiu-ming~Cheung$^1$\thanks{Corresponding author: Yiu-ming Cheung (ymc@comp.hkbu.edu.hk).}
  \ \ \ \ \ \
  Bo~Han$^1$
  \ \ \ \ \ \
  Defu~Lian$^2$ \\
  $^1$Department of Computer Science, Hong Kong Baptist University, Hong Kong, China\\
  $^2$School of Computer Science, University of Science and Technology of China, Hefei, China\\
  \texttt{\{cscwwu, ymc, bhanml\}@comp.hkbu.edu.hk, liandefu@ustc.edu.cn} \\
}
\begin{document}

\maketitle

\begin{abstract}
Existing machine-generated text (MGT) detection methods implicitly assume labels as the "golden standard". However, we reveal boundary ambiguity in MGT detection, implying that traditional training paradigms are inexact. Moreover, limitations of human cognition and the superintelligence of detectors make inexact learning widespread and inevitable. To this end, we propose an easy-to-hard enhancement framework to provide reliable supervision under such inexact conditions. Distinct from knowledge distillation, our framework employs an easy supervisor targeting relatively simple longer-text detection tasks (despite weaker capabilities), to enhance the more challenging target detector. Firstly, longer texts targeted by supervisors theoretically alleviate the impact of inexact labels, laying the foundation for reliable supervision. Secondly, by structurally incorporating the detector into the supervisor, we theoretically model the supervisor as a lower performance bound for the detector. Thus, optimizing the supervisor indirectly optimizes the detector, ultimately approximating the underlying "golden" labels. Extensive experiments across diverse practical scenarios, including cross-LLM, cross-domain, mixed text, and paraphrase attacks, demonstrate the framework's significant detection effectiveness. The code is available at: \url{https://github.com/tmlr-group/Easy2Hard}.
\end{abstract}

\section{Introduction}
\label{sec: introduction}

High-quality machine-generated text (MGT) is increasingly prominent due to its potential in areas like content creation \cite{achiam2023gpt}, intelligent education \cite{wen2024ai}, and customer service \cite{pandya2023automating}. However, its misuse presents significant challenges, including misinformation \cite{chen2024combating}, phishing attacks \cite{roy2024chatbots}, and malicious impersonation \cite{salewski2023context}. Compounding this is research \cite{clark2021all} indicating humans struggle to distinguish MGT from human-generated text (HGT), performing little better than random chance. This tension between MGT's risks and limited human detection highlights the urgent need for effective detection methods.

Existing detection methods can be primarily divided into: (1) metric-based methods, which detect differences by capturing the intrinsic statistical properties between MGTs and HGTs, using statistics such as Likelihood \cite{solaiman2019release} and Entropy \cite{gehrmann2019gltr, guo2024biscope}. In addition, a series of works represented by DetectGPT \cite{mitchell2023detectgpt}, Fast-DetectGPT \cite{baofast}, and DALD \cite{zeng2024dald} utilize the probability curvature of text under LLMs as key detection features. 
(2) Model-based methods do not rely on explicit feature engineering. They input the full text into deep learning models, which automatically learn and extract discriminative implicit features end-to-end. This category includes energy-based models \cite{bakhtin2019real}, GNN-based model \cite{zhong2020neural}, LLM \cite{verma2024ghostbuster}, and other methods such as SeqXGPT \cite{wang2023seqxgpt}, AI-Catcher \cite{alhijawi2025deep}, and RADAR \cite{hu2023radar}. With the powerful text representation learning capabilities of deep learning models, model-based methods typically demonstrate higher effectiveness in terms of detection performance and robustness.
\begin{figure*}[t]
	\centering
	\includegraphics[width=1.\linewidth]{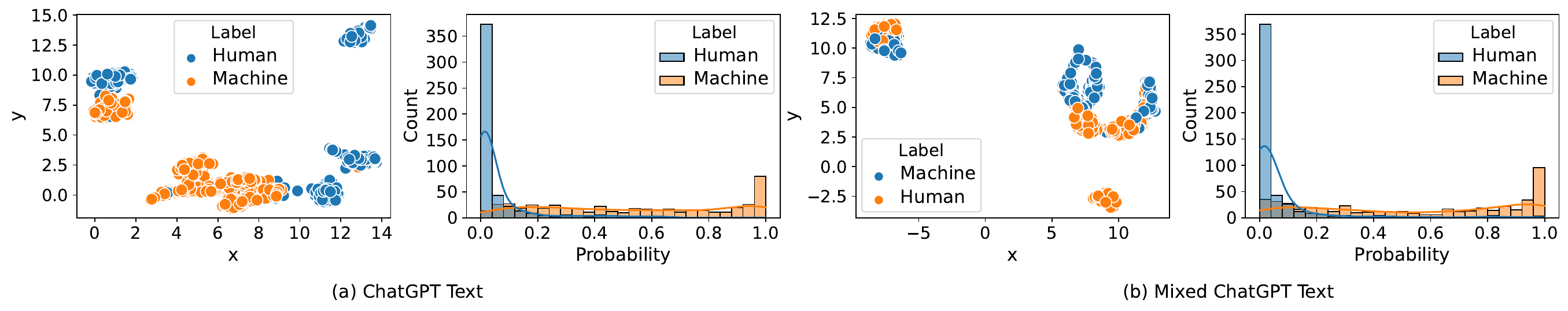}
	\caption{Boundary fuzziness evaluation between (mixed) MGT and HGT, which illustrates the latent space distribution and prediction confidence distribution under pure (Sub-Fig. 1 \& 2) and mixed (Sub-Fig. 3 \& 4) texts. The mixed text is obtained by replacing 1/4 of MGTs with HGTs.}
	\label{fig: motivation_boundary}
\end{figure*}
\begin{figure*}[t]
	\centering
	\includegraphics[width=1.\linewidth]{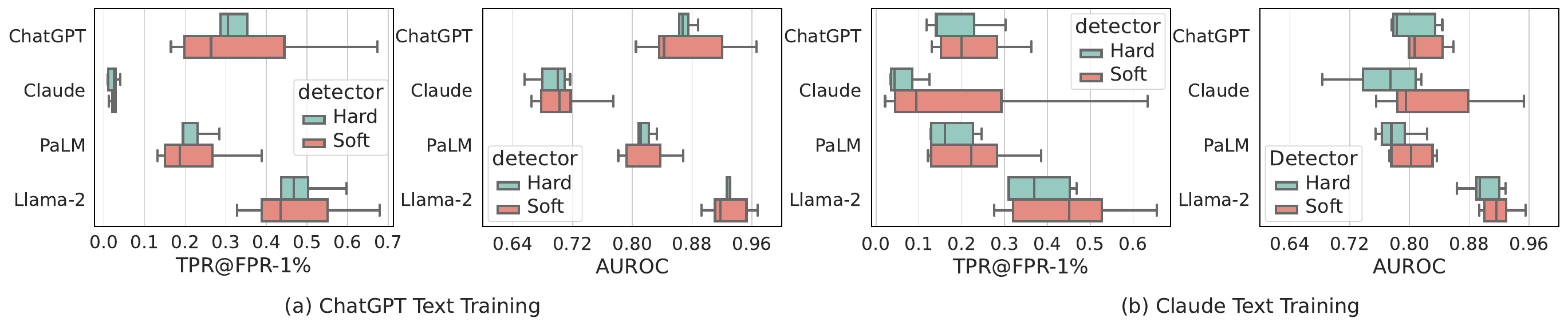}
	\caption{Performance comparison with and without using soft labels in mixed text (1/4 of MGT was replaced with HGT). The detector is ChatGPT-D \cite{guo2023close}.}
	\label{fig: motivation_soft}
    \vspace{-0.4cm}
\end{figure*}

The aforementioned methods often implicitly rely on the assumption of perfect data labels; however, this may not hold in MGT detection. Specifically, considering the prevalence of human-machine collaboration scenarios \cite{artemova2024beemo} and the powerful generative capabilities of LLMs, MGTs may explicitly or implicitly resemble HGTs, leading to blurred boundaries. For instance, MGTs and HGTs overlap significantly in the latent space, especially in mixed texts, as shown in Fig. \ref{fig: motivation_boundary}-1 and \ref{fig: motivation_boundary}-3. Besides, Fig. \ref{fig: motivation_boundary}-2 and \ref{fig: motivation_boundary}-4 show that HGT prediction probabilities concentrate near 0 (0 indicates HGT, while 1 indicates MGT) while MGTs with human-like features exhibit a broad distribution across $[0, 1]$, reinforcing their boundary ambiguity.
To further validate the inexactness of hard labels, we trained a detector on mixed texts (1/4 MGT replaced with HGT) using label smoothing \cite{muller2019does} (with a small smoothing factor of 0.05). Notably, even though we cannot access the underlying "golden" labels, the soft label with a small smoothing factor is closer to the golden labels than hard labels in significant mixed texts. As shown in Fig. \ref{fig: motivation_soft}, the results clearly indicate that a higher upper bound for label smoothing means it has the potential for enhancement. These results collectively indicate that the existing learning paradigm is inexact. See Appendix \ref{sec: appendix_motivation} for more details and results.

Despite this clear insight, resolving the issue is challenging. MGT detection task surpasses human cognition, rendering inexact label annotation widespread. Furthermore, existing MGT detectors are often more capable than humans, making it difficult to reliably assess the detection quality, i.e., the supervisor may be weaker than the detector, rendering inexact supervision inevitable. In summary, unlike the limited accidental errors in noisy label learning, the fundamental difficulty in annotation and evaluation makes inexact supervision widespread and inevitable. This leads us to ask:

\textit{Is it possible to effectively learn from the supervisor when the provided label is inexact and the detection quality is difficult to assess?}

Our work is dedicated to exploring this question. To achieve this, some key issues need to be solved:
\begin{itemize}[leftmargin = 10 pt]
    \item \textbf{RQ1}. How can the supervisor provide more reliable supervision signals under inexact labels and unclear detection quality?
    \item \textbf{RQ2}. How can the detector be improved based on the feedback signals provided by supervisors?
\end{itemize}
To address this, we propose an easy-to-hard supervision framework to enhance detection. Its core idea is to utilize a carefully designed supervisor, focused on the relatively easier task of longer-text detection, to provide reliable feedback signals for guiding a more challenging target detector. To ensure the supervisor's reliable supervision signals, we meticulously consider its data quality improvement and structure design  (\textbf{RQ1}). Firstly, we construct longer texts as supervisor data, which can theoretically alleviate the impact of inexact labels, laying the foundation for reliable supervision. Secondly, we deviate from traditional approaches that rely on soft labels to provide supervision signals, because it is unclear whether the soft label is more informative under unclear detection quality. Instead, we structurally integrate the detector into the supervisor's design, establishing a connection between the supervisor and the detector, where the supervisor's performance serves as a lower bound for the detector's capabilities. This coupled structure allows the detector to be indirectly optimized via the supervisor (\textbf{RQ2}), finally encouraging convergence to the underlying "golden" labels.
Our contributions can be summarized as follows:
\begin{itemize}[leftmargin = 10 pt]
    \item We analyze the inexact supervision inherent in existing MGT detection methods and highlight this widespread and inevitable limitation as a critical direction for future research.
    \item We propose an easy-to-hard supervision framework for enhanced detection, theoretically proving its optimization properties that facilitate convergence towards the underlying "golden" labels.
    \item We demonstrate the effectiveness of the proposed framework with negligible latency in various practical scenarios, including cross-LLM, cross-domain, mixed text, and paraphrasing attacks.
\end{itemize}


\section{MGT Detection from an Inexact Supervision Perspective}
\label{sec: revisiting}

\textbf{Traditional Learning Paradigm of MGT Detection}. Let $\mathcal{S}$ denote the set of all possible text sequences, and $\mathcal{Y}=\left\{0, 1\right\}$ denote the hard-label space, where 0 represents HGT and 1 represents MGT. Each sequence $s\in\mathcal{S}$ can be understood as multiple dependent sentences. Then, the dataset can be represented as $\mathcal{D}=\{(x_i,y_i)\}_{i=1}^N$, where $x_i\in \mathcal{S}^{n_i}$ is a text composed of $n_i$ sequences from $\mathcal{S}$, and $y_i\in\mathcal{Y}$ is its corresponding hard label. This definition understands texts as a collection of multiple sequences, similar to the definition in the existing work \cite{chakrabortyposition} \footnote{This is a natural assumption where text often has multiple topics, and sentences for each topic are dependent.}. For the detector $f$ parameterized by $\theta_f$, the learning paradigm uses the cross-entropy loss, which is usually as follows,
\begin{equation}
    \mathcal{L}_{\text {train }}=-\frac{1}{N} \sum_{i=1}^N\left( y_{i} \log f\left(x_i, \theta\right)+(1-y_{i}) \log (1-f\left(x_i, \theta\right))\right).
    \nonumber
\end{equation}

\textbf{Inexact Supervision Learning of MGT Detection}. In the above learning paradigm, it is implicitly assumed that the label $y_i$ is "golden standard"; however, this may not hold in MGT detection.

First, the boundary between MGT and HGT is often unclear, leading to potentially inexact hard labels. This blurred distinction stems from (1) human-machine collaboration \cite{gero2019metaphoria,gero2022sparks}, where texts involve both human and machine contributions (e.g., LLM drafting followed by human editing); (2) the powerful generative capabilities of LLMs, where they trained on extensive human data are capable of producing highly human-like texts, especially for specific text types like short texts \cite{zhangdetecting}. Therefore, MGTs may (explicitly or implicitly) contain human-like features, rendering hard labels inexact. This is also confirmed by the empirical results (Fig. \ref{fig: motivation_boundary} and Fig. \ref{fig: motivation_soft}) shown in the Introduction above, and more results can be found in Appendix \ref{sec: appendix_motivation}.

An intuitive solution like knowledge distillation \cite{hinton2015distilling} faces a key challenge: its effectiveness mainly relies on the target model being weaker than the strong teacher model. This is difficult for MGT detection because human cognition in distinguishing MGT is limited (e.g., \cite{clark2021all} reports that human accuracy for GPT-3 texts is 49.9\%). Besides, current MGT detectors show super intelligence and are even smarter than humans (e.g., RADAR achieves 87.3\% of accuracy on Essay), making it even difficult to distinguish the quality of two predictions. For example, it is unclear whether 90\% or 95\% confidence is better for an MGT. Therefore, in MGT detection, the supervisor may be weak, and even picking a strong teacher model is challenging. 

Noisy label learning (NLL) \cite{han2020survey,foretsharpness} is another consideration, but its classic assumption is that there exist true, clearly discrete labels with only a limited random error occurring during the annotation process. Instead, in MGT detection, as emphasized above, limitations of human cognition in distinguishing MGT lead to widespread and complex inexactnesses in labeling (with correct categories), rather than limited random errors. Consequently, existing NLL techniques may require significant redefinition to accommodate this structure of inexact labels. See Appendix \ref{sec: discussion_nll} for more discussion.

In summary, due to human cognition's limitations and detectors' superintelligence, inexact supervision learning in MGT detection becomes widespread and inevitable. Therefore, this paper focuses on designing effective supervised learning algorithms under conditions where labels are generally inexact and detectors are difficult to evaluate reliably.
\section{Easy to Hard Supervision Enhancement Framework}
\label{sec: method}

\begin{figure*}[t]
	\centering
	\includegraphics[width=1.\linewidth]{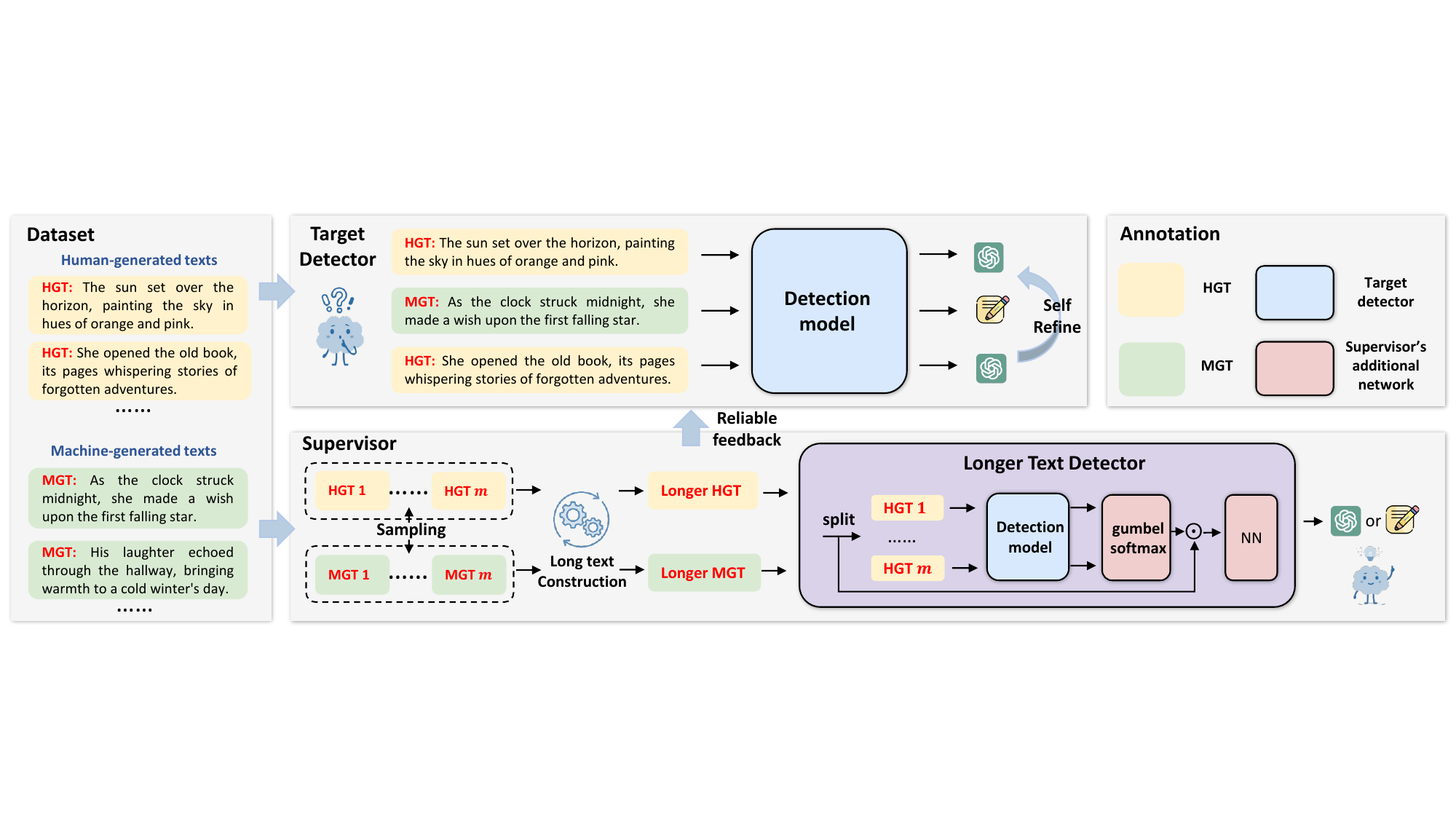}
    \vspace{-0.4cm}
	\caption{The easy-to-hard supervision framework, which uses a carefully designed supervisor, focused on relatively simple task of longer-text detection, to guide a more challenging target detector.}
	\label{fig: framework}
    \vspace{-0.2cm}
\end{figure*}

To effectively learn from the inexact supervision lens, a supervisor capable of providing reliable supervision signals and a detector capable of continuous learning from this feedback are essential.
To this end, we propose an easy-to-hard supervision enhancement framework, as shown in Fig. \ref{fig: framework}. It includes two key components: an "easy" supervisor, carefully designed for the simpler task of longer-text detection, and a "hard" target detector for the more challenging MGT detection. Briefly, the supervisor is carefully designed from both data and architecture aspects to provide more reliable feedback signals to the detector, while the detector continuously improves by integrating this feedback with its original learning paradigm. We will detail how they are designed to achieve this subsequently.

\subsection{Supervisor: Providing Reliable Supervision Signals}
\label{sec: supervisor}

In MGT detection, given the inexact labels and difficulty in reliably evaluating detection quality, the supervisor must provide as reliable supervision signals as possible. To achieve this crucial goal, our approach focuses on two key aspects: improving the supervisor's training data quality and designing its model architecture.

\textbf{Data Quality Improvement}. A supervisor's ability to provide reliable supervision is closely linked to its own performance; thus, effective supervisor learning is a fundamental prerequisite for achieving reliable supervision. Recognizing the inherent imprecision in training data, improving its training data quality is a natural and feasible method. Drawing upon existing research \cite{chakrabortyposition}, which indicates a positive correlation between text length and MGT detection and implies that longer texts are easier to detect accurately, we focus the supervisor on detecting longer texts by constructing new longer text as its input data.

Specifically, let the longer text-label pair be denoted as $(x_{long},y_{long})$, then its generation rule is: first sample $y_{long}\sim\mathcal{Y}$, and then the corresponding longer text $x_{long}$ is constructed as follows: 
\begin{equation}
    x_{long}=\oplus_{j=1}^k x^{(j)},\ \text{where}\ \text{each}\ x^{(j)}\sim \{(x,y)\in\mathcal{D}|y=y_{long}\}.
    \label{eq: longer_text}
\end{equation}
Here, $\oplus$ denotes the text splicing operator. The rationale behind this splicing operation is that joining MGTs or HGTs does not alter their fundamental nature as machine-generated or human-written text, respectively.
The following theorem will prove this design's rationality by revealing the relationship between the distribution difference and the length of the text.

\begin{theorem}[\textbf{Distribution Difference for Longer Text}]
\label{theorem: tv}
Let $h(s)$ and $m(s)$  be the distributions for human-generated and machine-generated sequences on $s\in\mathcal{S}$, respectively, with the total variation distance $TV(m,h)=\delta>0$. For the text contains $n$ sequences, let $\alpha\ge 0$ denote the ratio of human-like component incorporated in MGT. For longer text formed by concatenating $k$ independent length-$n$ texts, the total variation distance between their distributions, $TV_{long}$ can be bounded by:
\begin{equation}
    1-2exp({-\frac{nk(1-\alpha)^2\delta^2}{2}}) \le \operatorname{TV}_{long} \le 1-(1-\delta)^{nk(1-\alpha)}.
    \nonumber
\end{equation}
\end{theorem}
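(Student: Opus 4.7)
The plan is to reduce $TV_{long}$ to the total variation distance between two product measures on the effective ``purely machine'' sequences, and then apply one standard technique for each bound: a maximal coupling for the upper bound, and a simple test plus Hoeffding's inequality for the lower bound.

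First I would make the modelling reduction explicit. A longer HGT is the product $h^{\otimes nk}$, while a longer MGT of total length $nk$ contains a proportion $\alpha$ of human-like sequences (governed by $h$) and a proportion $1-\alpha$ governed by $m$; reordering components (which does not change TV) lets us write the MGT distribution as $m^{\otimes N}\otimes h^{\otimes nk\alpha}$ with $N:=nk(1-\alpha)$. The common $h^{\otimes nk\alpha}$ factor contributes nothing to total variation, so
\[
TV_{long}=TV\!\bigl(m^{\otimes N},\,h^{\otimes N}\bigr).
\]

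For the upper bound I would invoke maximal coupling. Pick $\pi^*$ on $\mathcal{S}\times\mathcal{S}$ with marginals $m,h$ and $\Pr_{\pi^*}[X=Y]=1-TV(m,h)=1-\delta$. Forming $N$ independent copies $(X_i,Y_i)$ gives a coupling of $m^{\otimes N}$ and $h^{\otimes N}$ under which $\Pr[X^N=Y^N]=(1-\delta)^N$. The coupling characterization of TV then yields $1-TV(m^{\otimes N},h^{\otimes N})\ge (1-\delta)^N$, i.e. $TV_{long}\le 1-(1-\delta)^{nk(1-\alpha)}$.

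For the lower bound I would pass through binary hypothesis testing. By definition of TV there is an event $A\subseteq\mathcal{S}$ with $m(A)-h(A)=\delta$. Given $N$ i.i.d.\ sequences, use the test that declares MGT iff the empirical frequency $\widehat T_N=\frac{1}{N}\sum_{i=1}^N\mathbb{1}[s_i\in A]$ exceeds the midpoint $(m(A)+h(A))/2$; Hoeffding's inequality bounds each of the Type-I and Type-II errors by $\exp(-N\delta^2/2)$. Since the optimal Bayes error under the uniform prior equals $\tfrac12(1-TV(m^{\otimes N},h^{\otimes N}))$ and is no larger than the error of this suboptimal test, we get $\tfrac12(1-TV_{long})\le \exp(-N\delta^2/2)$, i.e. $TV_{long}\ge 1-2\exp(-nk(1-\alpha)\delta^2/2)$.

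The probabilistic machinery above is textbook; the actual obstacle I expect is the reduction step. One must justify that inside each length-$n$ MGT the $\alpha$ fraction of human-like sequences may be treated as i.i.d.\ draws from $h$, independent of the remaining machine-governed positions, and that this structure lifts cleanly to the concatenation across $k$ independent blocks (and that $nk(1-\alpha)$ can be taken integral or handled by rounding). Once this is formalised, both inequalities follow immediately, and the two techniques are essentially tight against each other in the regime $\delta\to 0$.
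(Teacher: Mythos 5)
Your proposal is correct, and your upper bound is exactly the paper's argument: a maximal coupling of $m$ and $h$ applied independently to each machine-governed component (with the human-like components coupled identically), giving $\Pr(X=Y)=(1-\delta)^{nk(1-\alpha)}$ and hence $TV_{long}\le 1-(1-\delta)^{nk(1-\alpha)}$. For the lower bound your route differs mildly but meaningfully. The paper keeps all $nk$ components of the concatenated text, so the mean gap of the empirical count of the separating set $A$ is diluted to $(1-\alpha)\delta$, and a Chernoff bound over $nk$ samples with deviation $(1-\alpha)\delta/2$ yields exactly the stated exponent $nk(1-\alpha)^2\delta^2/2$, via the direct estimate $TV_{long}\ge \Pr(E\mid \text{long MGT})-\Pr(E\mid \text{long HGT})$. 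You instead first cancel the common $h^{\otimes \alpha nk}$ block, reduce to $TV\bigl(m^{\otimes N},h^{\otimes N}\bigr)$ with $N=nk(1-\alpha)$, and run Hoeffding with the undiluted gap $\delta$ on $N$ samples (phrased through the Bayes-error identity $\tfrac12(1-TV)$, which is equivalent to the paper's difference-of-probabilities step). This gives $TV_{long}\ge 1-2\exp(-nk(1-\alpha)\delta^2/2)$, which is strictly sharper than the theorem's $1-2\exp(-nk(1-\alpha)^2\delta^2/2)$ for $\alpha\in(0,1)$ and therefore implies it; the paper's version buys nothing except matching the stated constant. The reduction step you flag as the main obstacle is not an extra burden relative to the paper: the paper adopts precisely the same modelling convention, writing the longer MGT as the product $m^{\times(1-\alpha)nk}h^{\times\alpha nk}$ (fixed positions, i.i.d.\ components), and the identity $TV(P\otimes R,\,Q\otimes R)=TV(P,Q)$ that your cancellation relies on is standard, so your argument is complete at the paper's own level of rigor.
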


The theorem indicates that increasing text number $k$ (original $k=1$) for longer text will amplify the distribution difference $TV_{long}$ between HGT and MGT. This leads to clearer classification boundaries and reduced hard label ambiguity. Training supervisors with this more discriminative data is crucial for achieving its better performance and providing reliable supervision. Furthermore, a larger mixed text ratio $\alpha$ tends to a smaller distribution distance, theoretically supporting the conjecture in Section \ref{sec: revisiting} regarding unclear boundaries due to (explicit or implicit) mixed text presence.

\textbf{Reliable Feedback}. Under the reliable learning of the supervisor, the core issue lies in providing reliable supervision signals to the target detector. An intuitive approach is to mimic knowledge distillation by directly using the supervisor's soft labels as supervision. However, this faces challenges in MGT detection. First, the detector's superintelligence makes it unclear if the supervisor's soft labels offer superior information. Second, while the supervisor also performs detection tasks, it focuses on longer texts with different distribution characteristics; these domain differences make ensuring soft label quality challenging when used for more difficult target detection. 

Therefore, instead of directly providing soft label supervision, we cleverly integrate the target detector into the supervisor's design. To achieve this, we reveal the theoretical upper bound of the supervisor's performance, and link factors influencing it to the detector's performance. This link allows the supervisor's performance to be viewed as a theoretical lower bound for the detector's capability. Thus, maximizing supervisor performance indirectly enhances detector performance. Specifically,

\begin{theorem}[\textbf{Detection Power for Longer Text}]
\label{theorem: complexity_iid}
Under the assumption of Theorem \ref{theorem: tv}, the supervisor's $\text{AUROC}_{supv.}$ satisfies
\begin{equation}
    \operatorname{AUROC}_{supv.}\le 1-\frac{1}{2}\cdot (1-\delta)^{2nk(1-\alpha)}.
    \nonumber
\end{equation}
\end{theorem}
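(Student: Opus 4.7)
The plan is to deduce the bound by first proving a generic inequality of the form $\operatorname{AUROC}^{*} \le 1 - \tfrac{1}{2}(1 - TV)^{2}$ that holds for the Bayes-optimal classifier between \emph{any} pair of distributions $(P_{+},P_{-})$, and then instantiating it with the longer-text distributions, using the upper bound $TV_{long} \le 1 - (1-\delta)^{nk(1-\alpha)}$ supplied by Theorem~\ref{theorem: tv}. Since $\operatorname{AUROC}_{supv.}$ is at most the Bayes-optimal AUROC for discriminating the two longer-text distributions, this chain is sufficient.

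For the generic inequality I would use the standard shared-mass decomposition. With $\tau := TV(P_{+},P_{-})$ and $q(x) := \min(p_{+}(x),p_{-}(x))$, both distributions admit the mixture representation
\begin{equation}
P_{+} = (1-\tau)\, Q + \tau\, R_{+},\qquad P_{-} = (1-\tau)\, Q + \tau\, R_{-},
\nonumber
\end{equation}
where $Q$ has density proportional to $q$ and $R_{\pm}$ carry the residual mass $\tau$. For independent draws $X \sim P_{+}$ and $Y \sim P_{-}$, the event that both come from the shared component $Q$ occurs with probability $(1-\tau)^{2}$. Conditionally on this event $X$ and $Y$ are i.i.d., so for \emph{any} scoring rule $s$ the contribution to the AUROC is exactly $\tfrac{1}{2}$: setting $p = P(s(X) > s(Y))$ and $r = P(s(X)=s(Y))$, exchangeability gives $p = P(s(Y) > s(X))$ and $2p+r = 1$, hence $p + r/2 = \tfrac{1}{2}$. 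The complementary event contributes at most $1$, which yields
\begin{equation}
\operatorname{AUROC}^{*} \;\le\; \tfrac{1}{2}(1-\tau)^{2} + \bigl[1-(1-\tau)^{2}\bigr] \;=\; 1 - \tfrac{1}{2}(1-\tau)^{2}.
\nonumber
\end{equation}

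Finally, applying this with $\tau = TV_{long}$ and using $1 - TV_{long} \ge (1-\delta)^{nk(1-\alpha)}$ from Theorem~\ref{theorem: tv}, squaring gives $(1 - TV_{long})^{2} \ge (1-\delta)^{2nk(1-\alpha)}$, and substitution delivers the claim. The main obstacle I anticipate is essentially bookkeeping in the decomposition step: one must verify that the conditional i.i.d.\ structure remains valid even when the supports of $Q$, $R_{+}$ and $R_{-}$ overlap. This is resolved by viewing the decomposition as a coupling—introducing a latent indicator that selects the mixture component—rather than as a partition of the sample space, after which the symmetry argument and the final substitution are routine.
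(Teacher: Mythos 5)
Your proposal is correct, and the overall route coincides with the paper's: the paper proves this theorem by invoking Proposition~1 of the cited work \cite{chakrabortyposition}, which gives $\operatorname{AUROC} \le \frac{1}{2} + \operatorname{TV}_{long} - \frac{\operatorname{TV}_{long}^2}{2} = 1 - \frac{1}{2}(1-\operatorname{TV}_{long})^2$, notes that this bound is monotone increasing in $\operatorname{TV}_{long}$, and then substitutes the upper bound $\operatorname{TV}_{long} \le 1-(1-\delta)^{nk(1-\alpha)}$ from Theorem~\ref{theorem: tv} — exactly your final step. The only difference is that where the paper cites the AUROC--TV inequality as an external result, you re-derive it from scratch via the shared-mass mixture decomposition and the exchangeability argument on the common component, and you correctly observe that the bound holds for an arbitrary scoring rule (so no appeal to Bayes optimality is even needed). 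Your coupling-based derivation of that lemma is sound, including the tie-handling term $p + r/2 = \tfrac{1}{2}$, so the net effect is a self-contained proof of the same chain of inequalities the paper uses.
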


The theorem reveals that the supervisor's performance is the lower bound of $1-\frac{1}{2}\cdot (1-\delta)^{2nk(1-\alpha)}$. If $k$ is an optimizable variable positively correlated with detector performance, then optimizing the supervisor favors larger $k$, thereby indirectly enhancing detector performance.

In the framework, $k$ represents the number of original texts $x^{(j)}$ concatenated to form the longer text for the supervisor, while the detector is aimed at identifying each original text $x^{(j)}$. To establish a positive correlation between $k$ and the detector’s performance (i.e., better detector performance corresponds to a larger $k$), we revisit the detector as a gating mechanism acting on each original text $x^{(j)}$ within the supervisor’s input: If the detector misclassfies text $x^{(j)}$, that it is filtered out, reducing the concatenation length $k$. In this way, a larger value of $k$ corresponds to higher accuracy. When $k$ reaches its maximum value (no filtering), the detector achieves an accuracy of 100\%. Formally, we change supervisor's input from $x_{long}$ in Eq. \ref{eq: longer_text} to
\begin{equation}
    x_{long}'=\left\{\begin{array}{lll}
\oplus_{j=1}^k \left(x^{(j)}\odot \operatorname{argmax}(f(x^{(j)},\theta_f))\right) & \text { if } \quad y_{long}=1, \\
\oplus_{j=1}^k \left(x^{(j)}\odot (1-\operatorname{argmax}(f(x^{(j)},\theta_f)))\right) & \text { if } \quad y_{long}=0.
\end{array}\right.
\label{eq: input_temp}
\end{equation}


Here, the element-wise multiplication $\odot$ is performed at the input embedding level. To ensure the supervisor's feedback could be back-propagated to the detector, we use Gumbel Softmax \cite{jang2017categorical} to replace discrete argmax. Further, we simplify the $y_i=0$ branch and simplify Eq. \ref{eq: input_temp} to
\begin{equation}
    x_{long}'=\oplus_{j=1}^k \left(x^{(j)}\odot \operatorname{Gumbel}(f(x^{(j)},\theta_f))\right),
    \label{eq: input}
\end{equation}
i.e., let HGT distribution $h(s)$ collapse to Dirac $\delta_0$ distribution \cite{hassani2009dirac}, which has following benefits.

\begin{theorem}[\textbf{Distribution Difference after HGT Distribution Collapse}]
\label{theorem: collapse}
Under the assumption of Theorem \ref{theorem: tv} and assuming that $m(0)\to 0$ \footnote{This is a mild assumption, where LLM usually does not correspond to zero vectors to ensure that the text has sufficient information and is non-trivial.}, then if $h(s)$ collapses to a Dirac $\delta_0$ distribution, we have $\lim_{m(0)\to 0}TV(h,m)= 1$.
\end{theorem}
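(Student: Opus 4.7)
The plan is to compute $TV(h, m)$ directly after the collapse and then take the limit, avoiding any intermediate approximations. I would adopt the supremum-over-measurable-sets definition $TV(h,m) = \sup_{A \subseteq \mathcal{S}} |h(A) - m(A)|$, which handles a point mass against a general distribution $m$ uniformly without requiring both to share a common density. After $h$ collapses to the Dirac $\delta_0$, we have $h(\{0\}) = 1$ and $h(A) = 0$ for every measurable $A$ with $0 \notin A$, so the entire mass of $h$ is concentrated on the atom $0$.

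Second, I would instantiate the supremum at the test set $A = \{0\}$ to obtain the one-sided bound $TV(h,m) \geq |h(\{0\}) - m(\{0\})| = 1 - m(0)$, where the absolute value is resolved using $m(0) \in [0,1]$. Combined with the universal upper bound $TV(h,m) \leq 1$, this gives the sandwich $1 - m(0) \leq TV(h,m) \leq 1$, and the squeeze theorem delivers $\lim_{m(0) \to 0} TV(h,m) = 1$, which is the desired claim. If $\mathcal{S}$ is taken to be discrete, the same value emerges in one line from the pmf formulation: $TV(h,m) = \tfrac{1}{2}\bigl(|1 - m(0)| + \sum_{s \neq 0} m(s)\bigr) = 1 - m(0)$, using $\sum_{s \neq 0} m(s) = 1 - m(0)$.

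The calculation is essentially a direct application of the definition, so the main obstacle is not analytical but notational: one must interpret $h$ consistently after the collapse, since a Dirac mass is not a density, and mixing densities with point masses in the $\tfrac{1}{2}\int |h - m|$ formulation is ill-defined. Adopting the supremum-over-sets (equivalently, signed-measure) formulation from the start eliminates this subtlety at no cost, and the result then holds with no further structural assumption on $m$ beyond being a probability measure whose mass at $0$ vanishes. Together with Theorem~\ref{theorem: tv}, this justifies the simplification from Eq.~\ref{eq: input_temp} to Eq.~\ref{eq: input}, because collapsing $h$ drives $\delta \to 1$ in the distribution-gap bound and thereby sharpens the supervisor's separability.
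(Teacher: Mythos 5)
Your proposal is correct and follows essentially the same route as the paper's proof: both instantiate the supremum defining total variation at the test set $A=\{0\}$, obtain the lower bound $TV(h,m)\ge 1-m(0)$, and combine it with the universal upper bound $TV\le 1$ to conclude the limit is $1$. Your additional exact pmf computation in the discrete case and the remark on treating $h$ as a measure rather than a density are fine refinements but do not change the argument.
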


This theorem indicates that this simplification maximizes the distribution distance between MGT and HGT. Similar to data quality improvement, this reduces the learning difficulty for the supervisor.

In summary, the supervisor $g$ parameterized by $\theta_g$ makes predictions for the long text $x_{long}$ as $g(x_{long}', \theta_g)$, where $x_{long}'$ is calculated by Eq. \ref{eq: input}. Assuming that the longer text dataset is $\mathcal{D}_{long}$, the supervisor's training loss $\mathcal{L}_{supv.}$ using cross entropy is as follows,
\begin{equation}
    \mathcal{L}_{supv.}=-\frac{1}{\left|\mathcal{D}_{long}\right|} \sum_{(x_{long},y_{long})}\left( y_{long} \log g\left(x_{long}', \theta_g\right)+(1-y_{long}) \log \left(1-g\left(x_{long}', \theta_g\right)\right)\right).
    \label{eq: loss_supervisor}
\end{equation}
\begin{theorem}[\textbf{The Effectiveness of the Proposed Framework}]
\label{theorem: golden}
Under the assumption of Theorem \ref{theorem: tv}, and assuming that the MGT's golden label is approximately the proportion of pure machine-generated content distinct from HGT, if the supervisor reaches the best possible one, the detector converges to the underlying golden labels.
\end{theorem}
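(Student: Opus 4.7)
The plan is to use the results of Theorems \ref{theorem: tv}--\ref{theorem: collapse} to reduce the statement ``best possible supervisor'' to a concrete variational condition on the detector, and then show that this condition is uniquely solved by the golden-label configuration. At a high level: (i) the supervisor's only handle on the world is the Gumbel-gated concatenation $x_{long}'$ of Eq.\ \ref{eq: input}, so being ``best possible'' pins down the law of $x_{long}'$ up to equivalence; (ii) by assumption, a piece $x^{(j)}$ with true pure-MGT proportion $1-\alpha_j$ has golden label $1-\alpha_j$; and (iii) the gating law that realises the saturating $x_{long}'$-distribution is exactly $f(x^{(j)},\theta_f) = 1-\alpha_j$.

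First I would cash out ``best possible supervisor'' quantitatively. After the HGT collapse of Theorem \ref{theorem: collapse}, the HGT branch of $x_{long}'$ concentrates on a Dirac mass at the origin, so the supervisor's task reduces to distinguishing the Gumbel-gated MGT concatenation from this degenerate target. By Theorem \ref{theorem: complexity_iid}, the attainable AUROC is upper-bounded by $1 - \tfrac{1}{2}(1-\delta)^{2 n k_{eff}(1-\alpha_{eff})}$, where $(k_{eff},\alpha_{eff})$ denote the effective concatenation length and residual human-like ratio after gating. Saturating this upper bound is equivalent to jointly optimising $(k_{eff},\alpha_{eff})$ as functionals of the detector outputs $p_j := f(x^{(j)},\theta_f)$.

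Second, I would localise the optimisation piece by piece. For each $x^{(j)}$, opening the gate too narrowly ($p_j$ smaller than $1-\alpha_j$) discards genuine MGT mass and shrinks $k_{eff}$, while opening it too widely ($p_j$ larger than $1-\alpha_j$) drags the $\alpha_j$ human-like portion -- already concentrated near zero by Theorem \ref{theorem: collapse} -- into the MGT-conditional distribution and inflates $\alpha_{eff}$. Using the joint monotonicity of the bounds of Theorem \ref{theorem: tv} in $k$ and in $1-\alpha$, together with the concavity of the map $x \mapsto 1-(1-\delta)^x$ inside the Gumbel-expectation, the balance point that maximises the AUROC upper bound is $p_j = 1-\alpha_j$. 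Invoking the theorem's standing assumption that the golden label of $x^{(j)}$ is this same proportion, $f(x^{(j)},\theta_f) \to 1-\alpha_j$ as the supervisor reaches its best possible performance, which is the claimed convergence.

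The hard part will be making step two rigorous. A naive accounting of the effective pure-MGT mass as a linear functional of the $p_j$'s is maximised at $p_j = 1$ and does not by itself single out $p_j = 1-\alpha_j$; the golden-label configuration only emerges once one takes into account that admitting a mixed text with $p_j = 1$ injects zero-mass (i.e., HGT-like mass after Theorem \ref{theorem: collapse}) into the MGT-conditional distribution, and that this contamination enters nonlinearly through the product structure underlying the TV upper bound of Theorem \ref{theorem: tv}. I would therefore work with the per-sequence TV or KL increment, use the coupling construction between the gated concatenation and an ideal pure-MGT concatenation of random length, and apply Jensen's inequality to obtain strict concavity of the expected TV increment in each $p_j$ with interior maximiser $1-\alpha_j$. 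The lower TV bound of Theorem \ref{theorem: tv} would then be used to exclude the boundary optima $p_j \in \{0,1\}$ whenever $\alpha_j \in (0,1)$, closing the uniqueness argument and delivering the convergence conclusion.
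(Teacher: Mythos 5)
Your high-level route is the same as the paper's: read the Gumbel gate in expectation as removing a fraction $\alpha'$ of the concatenated content when the detector's soft output is $1-\alpha'$, plug the post-gating effective parameters into the AUROC upper bound of Theorem \ref{theorem: complexity_iid}, and argue that a best-possible supervisor forces the gating to coincide with the golden proportion. The paper makes this work with a two-case corner argument: for $\alpha'\ge\alpha$ the achievable bound $1-\tfrac12(1-\delta)^{2nk(1-\alpha')}$ is strictly decreasing in $\alpha'$, so over-filtering is strictly suboptimal; for $\alpha'\le\alpha$ the retained human-like blocks are coupled against identical human blocks on the reference side, giving $\operatorname{TV}\bigl(m^{\times(1-\alpha)nk}h^{\times(\alpha-\alpha')nk},\,h^{\times(1-\alpha)nk}h^{\times(\alpha-\alpha')nk}\bigr)\le \operatorname{TV}\bigl(m^{\times(1-\alpha)nk},h^{\times(1-\alpha)nk}\bigr)$, so widening the gate can never raise the bound above its value at $\alpha'=\alpha$. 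The maximum therefore sits at the corner $\alpha'=\alpha$, i.e.\ the detector outputs $1-\alpha$, the assumed golden label.

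The gap is precisely in the step you flag as the hard part, and your proposed fix does not close it. You correctly diagnose that a naive accounting is maximized at $p_j=1$, but the remedy you sketch --- strict concavity of an expected TV increment in $p_j$ with \emph{interior} maximizer exactly at $1-\alpha_j$, obtained via Jensen, plus the lower bound of Theorem \ref{theorem: tv} to ``exclude boundary optima'' --- is neither derived nor consistent with the actual geometry of the objective. Under the bounds you are invoking, the dependence on $\alpha'$ is strictly decreasing on $\alpha'\ge\alpha$ and only non-increasing (flat at the level of the upper bound) on $\alpha'\le\alpha$: the optimum at $\alpha'=\alpha$ is a kink created by the constraint that one cannot retain more machine content than exists, not an interior stationary point of a concave function, so no Jensen-type concavity argument will single out $1-\alpha_j$. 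Moreover, the Theorem \ref{theorem: tv} lower bound concerns the ungated concatenation and yields no per-$p_j$ strictness; and after the HGT collapse of Theorem \ref{theorem: collapse} the reference class is essentially a Dirac at the origin while retained human embeddings are nonzero, so without the matched-block coupling used in the paper's $\alpha'\le\alpha$ case it is not even clear that $p_j=1$ hurts at all --- which is exactly the direction your argument must rule out. Until you supply that cancellation (or some other argument showing admitted human-like content buys no additional distinguishability) and combine it with the strict monotonicity on the over-filtering side, the proposal does not establish convergence of the detector to the golden labels.
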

See Appendix \ref{sec: reasonable_golden} for the discussion of the rationality of this golden label approximation. Unlike traditional methods that merely fit binary hard labels, this theorem reveals how the supervisor guides the detector towards convergence with more accurate underlying "golden" labels.

\subsection{Detector: Learning from Reliable Signals}

The detector is the target optimization model, which will improve itself by learning category information from hard labels and feedback from the supervisor. First, as an enhancement framework, we do not alter the original detector's structure and training paradigm, thus ensuring flexible extensibility and convenient application to various detectors. Therefore, the first loss is the original detector loss $\mathcal{L}_{ori}$, and the specific form depends on the implementation of the detector. Secondly, the design of the supervisor is closely linked to the detector, allowing it to indirectly optimize the detector, thus the second loss is supervisor loss $\mathcal{L}_{supv.}$ shown in Eq. \ref{eq: loss_supervisor}. In summary, we jointly train the supervisor and the detector, and the overall optimization objective with a coefficient $\lambda$ is as follows:
\begin{equation}
\theta_f,\theta_g=\arg\max_{\theta_f,\theta_g} \mathcal{L}_{ori.}+\lambda\cdot\mathcal{L}_{supv.}.
\label{eq: loss_all}
\end{equation}

The target detector $f$ is trained only on the original, natural text, which allows it to learn the true data distribution. Instead, the concatenated text is used only to train the supervisor $g$. This intentional separation ensures the distribution shift (natural text vs. longer text) does not compromise the target detector's performance. 

\subsection{Overall Framework}
\label{sec: overall}

\begin{wrapfigure}{r}{0.56\textwidth}
\vspace{-0.75cm}
\begin{minipage}{0.56\textwidth}
\begin{algorithm}[H]
\caption{Easy to Hard Supervision Framework}
\label{alg: framework}
\begin{algorithmic}[1]
\State {\bfseries Input:} Train data $\mathcal{D}=\{(x_i,y_i)\}_{i=1}^N$, the detector $f(x,\theta_f)$, the supervisor $g(x_{long}',\theta_g)$, training epochs $T$, learning rate $\eta$.
\For{$t=0$ {\bfseries to} $T-1$}
\For{each batch of samples $\mathcal{D}_B\sim \mathcal{D}$}
\State Based on the texts of $\mathcal{D}_B$, Construct $N'$ longer texts by Eq. \ref{eq: longer_text}, denoted as $\mathcal{D}_B'$.
\State Calculate detector's original loss $\mathcal{L}_{ori.}$.
\State Calculate supervisor's loss $\mathcal{L}_{supv.}$ by Eq. \ref{eq: loss_supervisor}.
\State Jointly optimize $\theta_g$ and $\theta_g$ by Eq. \ref{eq: loss_all}.
\EndFor
\EndFor
\State {\bfseries Return} the trained detector $f(x,\theta_f)$.

\end{algorithmic}
\end{algorithm}
\end{minipage}
\vspace{-0.3cm}
\end{wrapfigure}
Alg. \ref{alg: framework} outlines the proposed framework's process. For each training batch, the supervisor's training data, a longer text set $\mathcal{D}_B'$, is ﬁrst constructed based on the current batch data $\mathcal{D}_B$ (Line 4). Then, the original detector loss $\mathcal{L}_{ori.}$ (Line 5) and the supervisor's loss $\mathcal{L}_{supv.}$ (Line 6) are computed based on $\mathcal{D}_B$ and $\mathcal{D}_B'$, respectively. Finally, joint training is performed for the detector and supervisor (Line 7). 

Notably, supervisor data is constructed from the current batch $\mathcal{D}_B$ rather than the dataset $\mathcal{D}$. This design enables the reuse of $f(x^{(j)},\theta_f)$, rendering the computational delay for $x''$ in Eq. \ref{eq: loss_supervisor} negligible. The main training delay stems from the supervisor's forward and backward passes. Fortunately, by comprehensively considering performance and efficiency (see Appendix \ref{sec: appendix_implementation}), the supervisor uses a simple network (e.g., the three-layer fully connected network in our work) to achieve good supervision, resulting in computational costs trivial compared to the detector's typically complex pre-trained model. Therefore, the proposed framework introduces only minimal training latency, and the detector's inference time remains unchanged due to its unmodified nature. 
See Appendix \ref{sec: further_discussion} for more discussion of the proposed framework.

\section{Experiments}
\label{sec: experiment}

\textbf{Datasets and LLMs}. We conduct experiments on two public datasets, Essay \cite{verma2024ghostbuster} and DetectRL \cite{wu2024detectrl}, to validate our effectiveness. The Essay dataset comprises MGTs generated by GPT4All, ChatGPT, ChatGPT-turbo, ChatGLM, Dolly, and Claude. The DetectRL dataset includes MGTs from PaLM, ChatGPT, Claude, and Llama-2. Furthermore, the DetectRL dataset contains mixed text, paraphrase attack text, and cross-domain text to simulate and evaluate the detection performance in various complex real-world scenarios. Please refer to Appendix \ref{sec: appendix_datasets} for detailed descriptions of these datasets and Appendix \ref{sec: appendix_implementation} for the implementation details of the experiments.

\textbf{Baselines}. We selected the following representative baselines, including: (1) metric-based methods: Likelihood \cite{solaiman2019release}, Log-Rank \cite{mitchell2023detectgpt}, Entropy \cite{gehrmann2019gltr}, NPR \cite{su2023detectllm}, DetectGPT \cite{mitchell2023detectgpt}, and Fast-DetectGPT (FastGPT) \cite{bao2024fast}; (2) model-based methods: ChatGPT-D \cite{guo2023close}, RADAR \cite{hu2023radar}, and MPU \cite{tianmultiscale}. Notably, the proposed enhancement framework can be easily applied to model-based methods, and we denote the enhanced versions using the proposed framework as ChatGPT-E, RADAR-E, and MPU-E. Please refer to Appendix \ref{sec: appendix_baselines} for detailed information on these baselines.

\textbf{Evaluation metric}. Following \cite{tufts2024examination,fraser2024detecting,hans2024spotting}, considering the negative impact on users when HGT is misclassified as MGT, and thus, we typically expect a very low false positive rate (FPR). Therefore, we focus on the true positive rate (TPR) at low TPR. Specifically, we evaluate TPR when FPR is set at 1\%, and denote it as TPR@FPR-1\%. Besides, we use the area under the receiver operating characteristic curve (AUROC), which is commonly used in MGT detection.

\subsection{Performance Evaluation}
\label{sec: comparison}

We conduct extensive evaluations in various practical scenarios \cite{li2024mage}, sentence-level, paragraph-level, mixed, paraphrasing, and cross-domain texts. Their detailed descriptions are in Appendix \ref{sec: scenario}.

\begin{table}[t]
  \centering
  \caption{Performance concerning TPR@FPR-1\% (\%) on DetectRL. The detection model is trained on text generated by PaLM.}
  \renewcommand\arraystretch{0.5}
  \begin{adjustbox}{width=1.\textwidth}
  \setlength{\tabcolsep}{0.01\linewidth}{
    \begin{tabular}{c|cccc|c|cccc|c}
    \hline\noalign{\smallskip} 
    \multirow{2}[0]{*}{Method} & \multicolumn{5}{c|}{Sentence-level}    & \multicolumn{5}{c}{Paragraph-level} \\
    \noalign{\smallskip} \cline{2-11}\noalign{\smallskip} 
          & PaLM  & ChatGPT  & Claude & Llama-2 & Avg.   & PaLM  & ChatGPT  & Claude & Llama-2 & Avg. \\
          \noalign{\smallskip} \hline\noalign{\smallskip} 
    Likelihood & $4.83_{\pm0.39}$ & $1.58_{\pm0.23}$ & $0.72_{\pm0.13}$ & $5.54_{\pm0.40}$ & 3.17  & $25.66_{\pm2.41}$ & $10.21_{\pm1.40}$ & $1.78_{\pm0.38}$ & $38.39_{\pm0.92}$ & 19.01 \\
    Log-Rank & $4.84_{\pm0.41}$ & $1.23_{\pm0.25}$ & $0.72_{\pm0.13}$ & $5.25_{\pm0.87}$ & 3.01  & $27.49_{\pm1.13}$ & $11.55_{\pm1.93}$ & $2.08_{\pm0.65}$ & $41.93_{\pm0.47}$ & 20.76 \\
    Entropy & $0.47_{\pm0.16}$ & $0.36_{\pm0.26}$ & $0.52_{\pm0.22}$ & $0.49_{\pm0.33}$ & 0.46  & $6.95_{\pm0.78}$ & $0.25_{\pm0.16}$ & $1.51_{\pm0.34}$ & $2.03_{\pm0.73}$ & 2.68 \\
    NPR   & $2.24_{\pm0.24}$ & $1.72_{\pm0.20}$ & $1.03_{\pm0.06}$ & $3.95_{\pm0.69}$ & 2.23  & $6.80_{\pm1.59}$ & $3.71_{\pm2.48}$ & $3.29_{\pm4.24}$ & $16.93_{\pm6.50}$ & 7.68 \\
    DetectGPT & $0.72_{\pm0.17}$ & $0.38_{\pm0.09}$ & $0.25_{\pm0.13}$ & $0.84_{\pm0.17}$ & 0.54  & $5.51_{\pm1.21}$ & $7.00_{\pm1.41}$ & $11.10_{\pm2.45}$ & $5.27_{\pm0.45}$ & 7.22 \\
    FastGPT & $1.33_{\pm0.34}$ & $0.27_{\pm0.10}$ & $0.09_{\pm0.06}$ & $1.65_{\pm0.63}$ & 0.84  & $18.15_{\pm1.44}$ & $11.87_{\pm1.21}$ & $0.44_{\pm0.17}$ & $29.49_{\pm0.70}$ & 14.99 \\
    \noalign{\smallskip} \hline\noalign{\smallskip} 
    ChatGPT-D & $9.71_{\pm1.11}$ & $9.21_{\pm1.70}$ & $3.13_{\pm0.46}$ & $13.91_{\pm1.82}$ & 8.99  & $25.12_{\pm6.74}$ & $17.21_{\pm5.86}$ & $4.62_{\pm1.11}$ & $44.28_{\pm8.56}$ & 22.81 \\
    \rowcolor{gray!20} \textbf{ChatGPT-E} & $11.52_{\pm0.78}$ & $11.24_{\pm2.23}$ & $3.43_{\pm0.39}$ & $15.72_{\pm1.18}$ & 10.48 & $27.81_{\pm9.32}$ & $22.52_{\pm9.29}$ & $5.86_{\pm1.76}$ & $47.14_{\pm10.86}$ & 25.83 \\
    MPU   & $27.38_{\pm1.63}$ & $26.45_{\pm3.30}$ & $7.31_{\pm0.75}$ & $30.42_{\pm2.67}$ & 22.89 & $70.43_{\pm1.63}$ & $79.16_{\pm1.71}$ & $18.22_{\pm1.94}$ & $87.89_{\pm1.23}$ & 63.92 \\
    \rowcolor{gray!20} \textbf{MPU-E} & $30.40_{\pm2.40}$ & $30.04_{\pm4.29}$ & $7.40_{\pm0.47}$ & $\textbf{31.55}_{\pm3.37}$ & 24.85 & $75.75_{\pm3.16}$ & $\textbf{84.23}_{\pm4.85}$ & $19.78_{\pm2.21}$ & $\textbf{90.31}_{\pm1.94}$ & 67.52 \\
    RADAR & $34.38_{\pm1.19}$ & $39.89_{\pm5.10}$ & $10.66_{\pm1.59}$ & $26.98_{\pm1.78}$ & 27.98 & $80.67_{\pm3.10}$ & $83.54_{\pm2.45}$ & $39.18_{\pm5.16}$ & $86.01_{\pm2.61}$ & 72.35 \\
    \rowcolor{gray!20} \textbf{RADAR-E} & $\textbf{38.60}_{\pm2.51}$ & $\textbf{45.32}_{\pm5.55}$ & $\textbf{11.71}_{\pm1.60}$ & $31.00_{\pm3.13}$ & 31.65 & $\textbf{82.15}_{\pm5.80}$ & $84.10_{\pm3.44}$ & $\textbf{44.25}_{\pm9.65}$ & $86.58_{\pm3.14}$ & \textbf{74.27} \\
    \noalign{\smallskip} \hline
    \end{tabular}%
    }
    \end{adjustbox}
  \label{tab: detectrl_tpr}%
  \vspace{-0.2cm}
\end{table}%

\begin{table}[t]
\scriptsize
  \centering
  \caption{Performance concerning TPR@FPR-1\% (\%) on Essay. The detection model is trained on text generated by GPT4All.}
  \renewcommand{\arraystretch}{0.5}
  \begin{adjustbox}{width=1.\textwidth}
  \setlength{\tabcolsep}{0.008\linewidth}
    \begin{tabular}{c|c|cccccc|c}
     \hline\noalign{\smallskip} 
    Setting & Method & GPT4All  & ChatGPT  & ChatGPT-turbo  & ChatGLM  & Dolly  & Claude  & Avg. \\
     \noalign{\smallskip} \hline\noalign{\smallskip} 
    \multirow{12}[0]{*}{\makecell[c]{Sent.\\ level}} & Likelihood & $9.18_{\pm1.56}$ & $14.05_{\pm0.57}$ & $5.46_{\pm0.46}$ & $26.04_{\pm4.35}$ & $6.86_{\pm1.13}$ & $2.82_{\pm0.17}$ & 10.73 \\
          & Log-Rank & $8.98_{\pm1.11}$ & $13.35_{\pm0.58}$ & $4.97_{\pm0.50}$ & $29.04_{\pm3.44}$ & $6.78_{\pm1.66}$ & $2.32_{\pm0.05}$ & 10.91 \\
          & Entropy & $1.36_{\pm0.26}$ & $2.40_{\pm0.55}$ & $1.65_{\pm0.34}$ & $1.10_{\pm0.31}$ & $1.70_{\pm0.43}$ & $1.44_{\pm0.18}$ & 1.61 \\
          & NPR   & $6.13_{\pm0.66}$ & $7.13_{\pm0.54}$ & $4.04_{\pm0.74}$ & $14.14_{\pm1.85}$ & $4.71_{\pm0.51}$ & $3.26_{\pm0.43}$ & 6.57 \\
          & DetectGPT & $4.11_{\pm0.57}$ & $3.57_{\pm0.37}$ & $3.48_{\pm0.33}$ & $5.77_{\pm1.47}$ & $3.70_{\pm0.75}$ & $3.44_{\pm0.17}$ & 4.01 \\
          & FastGPT & $12.38_{\pm1.60}$ & $10.30_{\pm0.26}$ & $4.58_{\pm0.68}$ & $28.86_{\pm2.71}$ & $9.01_{\pm0.66}$ & $2.72_{\pm0.43}$ & 11.31 \\
           \noalign{\smallskip} \cline{2-9}\noalign{\smallskip} 
          & ChatGPT-D & $15.64_{\pm0.36}$ & $11.50_{\pm0.71}$ & $9.94_{\pm1.11}$ & $23.96_{\pm1.59}$ & $4.66_{\pm0.40}$ & $2.30_{\pm0.47}$ & 11.33 \\
          & \cellcolor{gray!20} \textbf{ChatGPT-E} & \cellcolor{gray!20} $16.13_{\pm1.12}$ & \cellcolor{gray!20} $12.20_{\pm0.90}$ & \cellcolor{gray!20} $10.49_{\pm0.79}$ & \cellcolor{gray!20} $25.46_{\pm1.70}$ & \cellcolor{gray!20} $4.96_{\pm0.11}$ & \cellcolor{gray!20} $2.40_{\pm0.38}$ & \cellcolor{gray!20} 11.94 \\
          & MPU   & $18.99_{\pm1.09}$ & $14.38_{\pm1.70}$ & $12.88_{\pm2.02}$ & $30.91_{\pm3.76}$ & $6.00_{\pm1.05}$ & $2.95_{\pm0.39}$ & 14.35 \\
          & \cellcolor{gray!20} \textbf{MPU-E} & \cellcolor{gray!20} $22.90_{\pm2.44}$ & \cellcolor{gray!20} $17.06_{\pm1.80}$ & \cellcolor{gray!20} $15.30_{\pm2.43}$ & \cellcolor{gray!20} $33.53_{\pm3.07}$ & \cellcolor{gray!20} $7.12_{\pm0.86}$ & \cellcolor{gray!20} $3.58_{\pm0.24}$ & \cellcolor{gray!20} 16.58 \\
          & RADAR & $28.85_{\pm2.25}$ & $30.88_{\pm2.17}$ & $32.56_{\pm1.59}$ & $39.99_{\pm2.43}$ & $13.79_{\pm1.04}$ & $\textbf{11.06}_{\pm1.22}$ & 26.19 \\
          & \cellcolor{gray!20} \textbf{RADAR-E} & \cellcolor{gray!20} $\textbf{32.80}_{\pm3.24}$ & \cellcolor{gray!20} $\textbf{34.33}_{\pm2.59}$ & \cellcolor{gray!20} $\textbf{36.69}_{\pm1.36}$ & \cellcolor{gray!20} $\textbf{44.90}_{\pm3.37}$ & \cellcolor{gray!20} $\textbf{14.78}_{\pm2.43}$ & \cellcolor{gray!20} $10.39_{\pm0.74}$ & \cellcolor{gray!20} \textbf{28.98} \\
     \noalign{\smallskip} \hline\hline\noalign{\smallskip} 
    \multirow{12}[0]{*}{ \makecell[c]{Para.\\ level}} & Likelihood & $46.33_{\pm16.49}$ & $68.62_{\pm13.32}$ & $73.60_{\pm14.63}$ & $92.86_{\pm4.84}$ & $20.67_{\pm10.79}$ & $12.36_{\pm6.32}$ & 52.41 \\
          & Log-Rank & $63.74_{\pm12.98}$ & $79.47_{\pm7.88}$ & $81.29_{\pm11.25}$ & $96.61_{\pm2.49}$ & $25.92_{\pm9.00}$ & $19.47_{\pm8.24}$ & 61.08 \\
          & Entropy & $3.78_{\pm0.91}$ & $11.07_{\pm2.00}$ & $16.31_{\pm1.79}$ & $8.35_{\pm3.44}$ & $3.91_{\pm1.40}$ & $6.22_{\pm1.85}$ & 8.27 \\
          & NPR   & $78.50_{\pm3.99}$ & $85.91_{\pm1.56}$ & $9.02_{\pm0.57}$ & $95.13_{\pm1.73}$ & $58.52_{\pm7.27}$ & $8.40_{\pm1.17}$ & 55.91 \\
          & DetectGPT & $31.53_{\pm6.96}$ & $39.47_{\pm8.30}$ & $7.24_{\pm1.24}$ & $30.31_{\pm7.82}$ & $20.48_{\pm3.80}$ & $5.64_{\pm0.74}$ & 22.45 \\
          & FastGPT & $0.18_{\pm0.17}$ & $0.40_{\pm0.26}$ & $68.27_{\pm4.92}$ & $1.70_{\pm0.73}$ & $0.00_{\pm0.00}$ & $7.69_{\pm1.45}$ & 13.04 \\
          \noalign{\smallskip} \cline{2-9}\noalign{\smallskip} 
          & ChatGPT-D & $58.13_{\pm3.64}$ & $49.91_{\pm7.73}$ & $34.80_{\pm5.79}$ & $86.74_{\pm3.63}$ & $16.52_{\pm2.30}$ & $2.31_{\pm1.08}$ & 41.40 \\
          & \cellcolor{gray!20} \textbf{ChatGPT-E} & \cellcolor{gray!20} $59.82_{\pm4.32}$ & \cellcolor{gray!20} $51.24_{\pm7.11}$ & \cellcolor{gray!20} $35.56_{\pm5.50}$ & \cellcolor{gray!20} $85.67_{\pm6.46}$ & \cellcolor{gray!20} $17.09_{\pm2.85}$ & \cellcolor{gray!20} $2.98_{\pm1.10}$ & \cellcolor{gray!20} 42.06 \\
          & MPU   & $71.07_{\pm7.13}$ & $71.64_{\pm7.01}$ & $48.98_{\pm7.98}$ & $94.78_{\pm2.39}$ & $28.69_{\pm4.95}$ & $7.64_{\pm2.46}$ & 53.80 \\
          & \cellcolor{gray!20} \textbf{MPU-E} & \cellcolor{gray!20} $78.31_{\pm6.35}$ & \cellcolor{gray!20} $74.09_{\pm7.14}$ & \cellcolor{gray!20} $52.09_{\pm8.18}$ & \cellcolor{gray!20} $96.88_{\pm0.56}$ & \cellcolor{gray!20} $32.08_{\pm5.53}$ & \cellcolor{gray!20} $9.20_{\pm3.09}$ & \cellcolor{gray!20} 57.11 \\
          & RADAR & $91.03_{\pm2.80}$ & $84.27_{\pm3.00}$ & $54.22_{\pm7.28}$ & $97.10_{\pm1.73}$ & $48.02_{\pm8.85}$ & $42.40_{\pm4.35}$ & 69.51 \\
          & \cellcolor{gray!20} \textbf{RADAR-E} & \cellcolor{gray!20} $\textbf{93.76}_{\pm1.88}$ & \cellcolor{gray!20} $\textbf{88.53}_{\pm2.88}$ & \cellcolor{gray!20} $\textbf{59.24}_{\pm7.84}$ & \cellcolor{gray!20} $\textbf{97.86}_{\pm1.48}$ & \cellcolor{gray!20} $\textbf{53.41}_{\pm8.30}$ & \cellcolor{gray!20} $\textbf{55.87}_{\pm5.06}$ & \cellcolor{gray!20} \textbf{74.78} \\
    \noalign{\smallskip} \hline
    \end{tabular}%
    \end{adjustbox}
  \label{tab: essay_tpr}%
  \vspace{-0.2cm}
\end{table}%

\begin{figure*}[t]
	\centering
	\includegraphics[width=1.\linewidth]{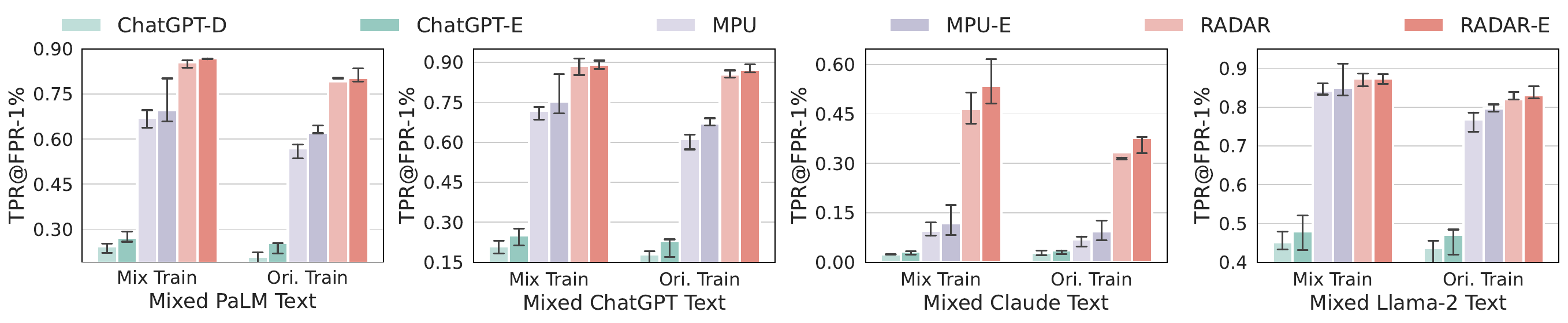}
	\caption{Test performance (TPR@FPR-1\%) under various LLM mixed texts. Detectors are trained on text generated by PaLM. For each sub-figure, the left group: detectors are trained on mixed text, and the right group: detectors are trained on original text.}
	\label{fig: all_mix_tpr}
    \vspace{-0.2cm}
\end{figure*}

\begin{figure*}[t]
	\centering
	\includegraphics[width=1.\linewidth]{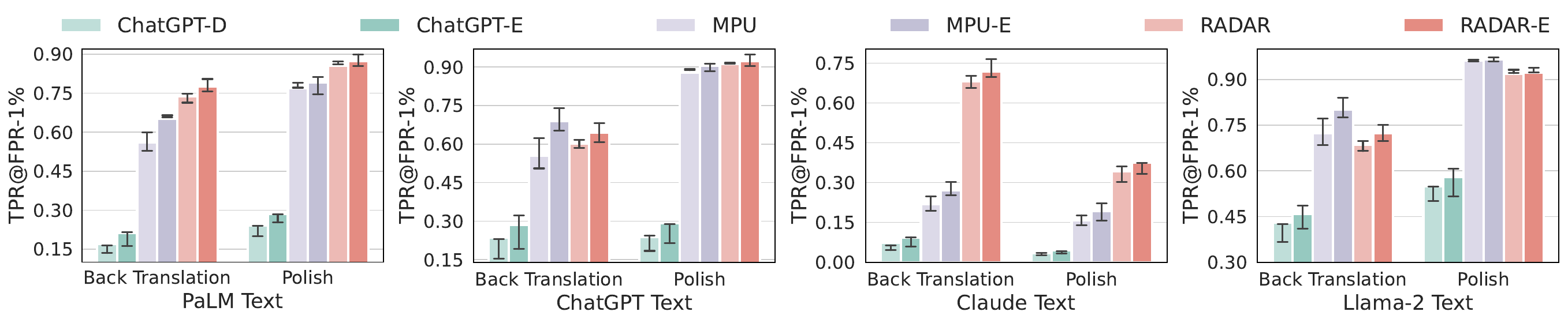}
	\caption{Robustness (TPR@FPR-1\%) against paraphrasing attacks (Back Translation and Polish). Detectors are trained on the PaLM texts and tested on the paraphrasing texts of various LLMs.}
	\label{fig: paraphrasing_tpr}
    \vspace{-0.2cm}
\end{figure*}

\textbf{Sentence-level Detection}. 
Tables \ref{tab: detectrl_tpr} (left) and \ref{tab: essay_tpr} (top) present sentence-level detection performance w.r.t. TPR@FPR-1\%, for models trained on PaLM and GPT4All texts, respectively. See Tables \ref{tab: detectrl_auroc}-\ref{tab: essay_tpr_sent_claude} in the Appendix for AUROC and results trained on other LLMs. Overall, the proposed enhancement strategy substantially improves the original models' performance. For example, on the DetectRL dataset, the average TPR@FPR-1\% for RADAR increases from 27.98\% to 31.65\% with the enhanced RADAR-E, while performance rises from 26.19\% to 28.98\% on the Essay dataset. Furthermore, the results consistently show that model-based methods generally surpass metric-based methods. For instance, FastGPT, the best-performing metric-based method in our experiment, is only comparable to ChatGPT-D on the Essay dataset and notably inferior to model-based methods on the DetectRL dataset. This highlights the significance of our enhancement targeted at model-based approaches. Interestingly, we also observe that cross-LLM detection performance is not always inferior to within-LLM detection. For example, Table \ref{tab: detectrl_tpr} shows RADAR performing worse on texts generated by PaLM compared to ChatGPT. We reasonably suspect that this may be related to the quality of the generated text, and the text generated by ChatGPT is more discriminative.

\textbf{Paragraph-level Detection}. Tables \ref{tab: detectrl_tpr} (right) and \ref{tab: essay_tpr} (bottom) show the paragraph-level detection performance in terms of TPR@FPR-1\%, where detectors are trained on PaLM and GPT4All, respectively. For performance on AUROC and additional results trained on other LLMs, please refer to Tables \ref{tab: detectrl_auroc}-\ref{tab: detectrl_tpr_llama} and \ref{tab: essay_auroc_para}-\ref{tab: essay_tpr_para_claude} in the Appendix. We observe enhancing effects similar to those at the sentence level, highlighting the proposed method's benefit at this granularity. Moreover, by comparing detection performance at the paragraph level and the sentence level, it is evident that sentence-level detection is significantly more challenging, similar to the finding in \cite{wang2023seqxgpt}. This underscores the significance of sentence-level detection and encourages it to be a primary focus for future studies.

\textbf{Mixed Text Detection}. Fig. \ref{fig: all_mix_tpr} illustrates the test performance (TPR@FPR-1\%) on explicit mixed text, and the corresponding AUROC performance can be found in Fig. \ref{fig: all_mix_auc} in the Appendix. Our experiments were conducted under two settings: training on mixed text and training on original text. Compared to the performance improvements shown in Table 1 (right), the enhancements in detecting mixed text are more pronounced. This may be attributed to the inherently less precise hard labels in mixed text, the core issue our proposed strategy directly addresses, thereby demonstrating greater potential for improvement and highlighting the rationale behind our design. Besides, comparing the detectors trained on mixed text (left group) and original text (right group), it can be found that the former has better robustness in detecting mixed texts, which is akin to adversarial training \cite{madry2018towards}.

\textbf{Paraphrasing Text Detection}. Recent studies have shown that MGT detectors are vulnerable to paraphrasing attacks \cite{krishna2023paraphrasing,sadasivan2023can}, where paraphrased text can bypass detection. To assess the robustness improvement in adversarial scenarios, we explored the detection performance on texts subjected to two common paraphrasing attacks: Polish and Back Translation \cite{wu2024detectrl}, as illustrated in Fig. \ref{fig: paraphrasing_tpr} and Fig. \ref{fig: paraphrasing_auc} in the Appendix. Notably, detectors employing the proposed enhancement framework exhibited greater robustness in these attack scenarios. Compared to non-attack performance (Table \ref{tab: detectrl_tpr}, right), performance generally declined under attack. This not only reaffirms the vulnerability of existing MGT detection methods but also underscores the critical need to enhance detector robustness.

\textbf{Cross Domain Detection}. Cross-domain performance was evaluated across four high-risk domains: arXiv Archive, Writing Prompts, XSum, and Yelp Reviews. Results are presented in Fig. \ref{fig: cross}, with additional results in Fig. \ref{fig: cross_addition} in the Appendix. In most settings, models employing the proposed strategy demonstrate significantly enhanced generalization capabilities. This is likely attributable to our framework reducing the model's reliance on inexact hard labels, thereby mitigating overfitting.

\textbf{Running Time}. Table \ref{tab: running_time_para} presents the runtime comparison in paragraph-level settings (sentence-level results are in Appendix Table \ref{tab: running_time_sent}). As discussed in Section \ref{sec: overall}, since the target detector is not modified, the proposed framework introduces no additional inference delay. Regarding training overhead, the longer text for the supervisor is constructed within the batch, and the intermediate results $f(x^{(j)},\theta_f)$ from the detector can be reused. This ensures negligible overhead during the data preparation phase. The primary training delay is the supervisor's forward and backward passes. However, as the supervisor model (i.e., three‐layer fully connected network in the experiments) is significantly simpler than detectors, the overall training delay is negligible.

\begin{figure}[t]
\centering
\begin{minipage}[c]{0.58\textwidth}
    \centering
    \includegraphics[width=\linewidth]{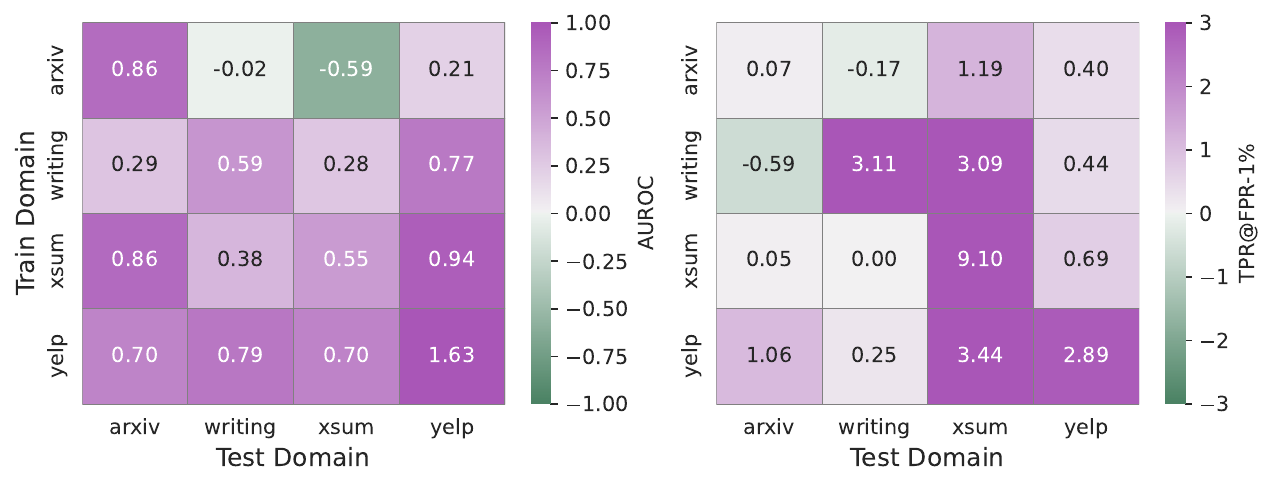}
    \captionof{figure}{The performance improvement of the enhanced RADAR-E compared to RADAR. The figure shows their differences for AUROC (left) and TPR@FPR-1\% (right). Positive values indicate performance improvement.}
    \label{fig: cross}
\end{minipage}
\hfill
\begin{minipage}[c]{0.41\textwidth}
\renewcommand{\arraystretch}{0.9}
    \centering
    \captionof{table}{Running time (seconds).}
    \begin{adjustbox}{width=1.\textwidth}
  \setlength{\tabcolsep}{0.019\linewidth}{
    \begin{tabular}{c|cc|cc}
    \hline\noalign{\smallskip} 
    \multirow{2}[0]{*}{Paragraph-level} & \multicolumn{2}{c|}{Training Time} & \multicolumn{2}{c}{Inference Time} \\
    \noalign{\smallskip} \cline{2-5}\noalign{\smallskip} 
          & Essay & DetectRL & Essay & DetectRL \\
          \noalign{\smallskip} \hline\noalign{\smallskip} 
    Likelihood & 9.3   & 12.1  & 41.5  & 54.2  \\
    Log-Rank & 11.2  & 13.8  & 50.3  & 62.0  \\
    Entropy & 7.6   & 10.6  & 34.8  & 47.8  \\
    NPR   & 308.0  & 527.0  & 1384.7  & 2369.2  \\
    DetectGPT & 299.2  & 577.3  & 1345.0  & 2595.5  \\
    FastGPT & 32.1  & 37.1  & 144.2  & 166.6  \\
    \noalign{\smallskip} \hline\noalign{\smallskip} 
    ChatGPT-D & 18.2  & 31.7  & 4.7   & 8.4  \\
    \textbf{ChatGPT-E} & 20.9  & 32.8  & 4.7   & 8.4  \\
    MPU   & 17.9  & 31.9  & 4.7   & 8.3  \\
    \textbf{MPU-E} & 19.2  & 33.1  & 4.7   & 8.3  \\
    RADAR & 29.8  & 46.1  & 7.9   & 13.3  \\
    \textbf{RADAR-E} & 32.0  & 48.4  & 7.9   & 13.4  \\
    \noalign{\smallskip} \hline
    \end{tabular}%
    }
    \end{adjustbox}
    \label{tab: running_time_para}
\end{minipage}
\end{figure}

\subsection{Comparison with Knowledge Distillation}
\label{sec: kd}
\begin{figure*}[t]
	\centering
	\includegraphics[width=1.\linewidth]{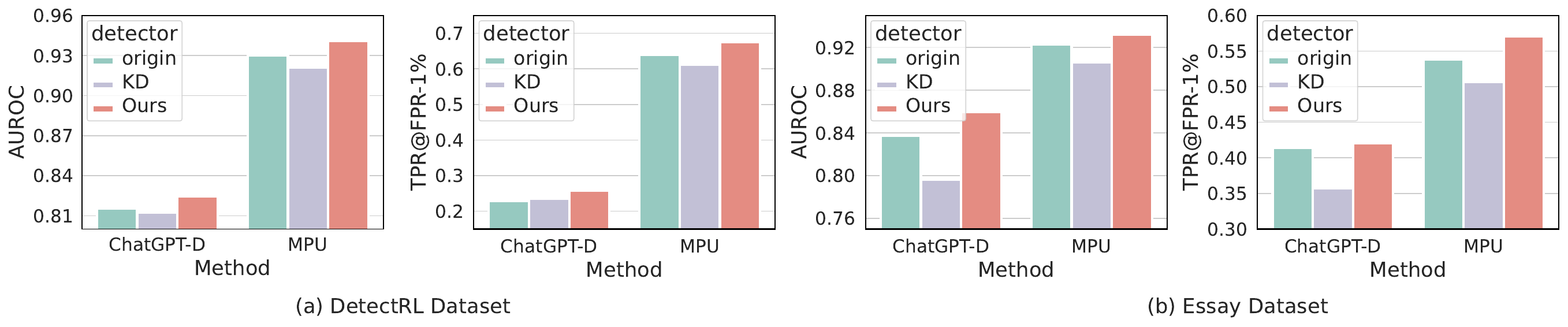}
	\caption{Performance comparison with Knowledge Distillation.}
	\label{fig: kd}
    \vspace{-0.3cm}
\end{figure*}

Fig. \ref{fig: kd} presents the performance comparison between the proposed framework and Knowledge Distillation (KD), which is based on the "strong teacher enhances weak student" concept.
Here, we selected RADAR, which has the best detection performance in the experiment, as the teacher model to guide ChatGPT-D and MPU. The results indicate that knowledge distillation even led to a decrease in the performance of the student models in most settings. This is primarily because the effectiveness of knowledge distillation largely depends on the quality of the teacher model (i.e., high-quality soft labels). However, obtaining such a teacher model is challenging in complex MGT detection scenarios.

\subsection{Supervision Quality Assessment}

To further verify whether the supervisor guides the detector to learn more accurate "golden" labels for enhancement, we introduce a knowledge distillation-based analysis method for verification. Specifically, we use the detectors before and after enhancement as teacher models, distill a student model respectively. By comparing these student models' performance, we can verify the quality of the supervisory signal. If the student distilled from the enhanced detector outperforms the student distilled from the original detector, it indicates that our approach has indeed enabled the detector to learn better detection scores from the supervisor, thereby improving detection performance. Table \ref{tab: supervisor} presents the distillation performance, where Origin-KD and Ours-KD denote the original RADAR detector and our enhanced RADAR (i.e., RADAR-E) guided MPU detector, respectively. By comparison, we observe that Ours-KD achieves superior performance, proving that the proposed method learns better detection scores from the supervisor to enhance detection.

\begin{table}[t]
  \centering
  \caption{The impact of teacher model (RADAR) enhancement on Knowledge Distillation.}
  \renewcommand\arraystretch{0.5}
  \begin{adjustbox}{width=1.\textwidth}
  \setlength{\tabcolsep}{0.01\linewidth}{
    \begin{tabular}{c|cccc|cccccc}
    \toprule
    \multirow{2}[0]{*}{Method} & \multicolumn{4}{c|}{DetectRL}  & \multicolumn{6}{c}{Essay} \\
    \noalign{\smallskip}\cline{2-11}\noalign{\smallskip}
    \multicolumn{1}{c}{} & \multicolumn{1}{c}{PaLM} & \multicolumn{1}{c}{ChatGPT} & \multicolumn{1}{c}{Claude} & \multicolumn{1}{c|}{Llama-2} & \multicolumn{1}{c}{GPT4All} & \multicolumn{1}{c}{ChatGPT} & \multicolumn{1}{c}{ChatGPT-turbo} & \multicolumn{1}{c}{ChatGLM} & \multicolumn{1}{c}{Dolly} & \multicolumn{1}{c}{Claude} \\
    \midrule
    Origin-KD & 66.01  & 73.28  & 15.77  & 83.66  & 69.34  & 41.29  & \textbf{93.97}  & 24.92  & 64.00  & 5.47  \\
    Ours-KD & \textbf{67.49}  & \textbf{76.00}  & \textbf{16.81}  & \textbf{84.77}  & \textbf{70.11}  & \textbf{42.53}  & \textbf{93.97}  & \textbf{25.68}  & \textbf{65.47}  & \textbf{6.00}  \\
    \bottomrule
    \end{tabular}%
    }
    \end{adjustbox}
  \label{tab: supervisor}%
\end{table}%

\subsection{Case Study}

We prepare some examples for quantitative analysis, as shown in Table \ref{tab: case_study}. Specifically, the table shows sequentially selected three MGTs from the DetectRL dataset. The text in the left column is accurately identified as "MGT" by both the baseline version of RADAR and our improved version RADAR-E. The text in the right column represents "win" cases for our approach, that is, they are misclassified by RADAR but correctly detected by RADAR-D. We can find that the easy-to-detect texts (left column) typically exhibit more complex idioms, formal vocabulary, and convoluted expressions, which are typical of machine-generated text. Instead, the difficult-to-detect texts (right column), which our method successfully detects, are characterized by short, plain, straightforward, and somewhat colloquial sentences, which are much closer to a typical human-like distribution. The characteristics of difficult-to-detect text and the effectiveness of the enhancement strategy intuitively demonstrate our advantages in processing ambiguous machine-generated text.

\begin{table}[t]
\centering
\scriptsize
\caption{Case study in RADAR detector.}
\setlength{\tabcolsep}{0.024\linewidth}{
\begin{tabular}{l|p{0.4\textwidth}|p{0.4\textwidth}}
\toprule
& \textbf{MGTs that can be detected by both RADAR and RADAR-D} & \textbf{MGTs that can only be detected by RADAR-D} \\
\midrule

\textbf{Case 1} & 
When I thought of the comment, I wrote it as Milio's, and I thought'meh. This is just another pizza shop. But after eating there, I realized that Milio was not just another Pizza shop. The food is delicious, the service is very good, and the atmosphere is warm and seductive. ... & 
The atmosphere is excellent with lots of big screen TV's to watch the game. The service was decent as well. The waitress was very friendly and watchful, but the kitchen seemed backed up because our food took over an hour to come out. ... \\
\midrule 

\textbf{Case 2} & 
The two-dimensional kagome lattice has been identified as a promising platform to realize quantum spin liquid states due to its unique geometry. In this paper, we report the synthesis and characterization of a series of two-dimensional kagome antiferromagnets, ZnxCu4-x(OD)6Cl2, ... & 
Sussex further improved their hopes of reaching the One-Day Taza knockout stages by beating Surrey for a third consecutive regrouped win. ... \\
\midrule

\textbf{Case 3} & 
Scientists, intrigued by my resilience, devised a daring plan to send me into the eeigmatic depths of a black hole. Embracing my inner badass, I agreed, embarking on a journey that would test the ilmits of my extraordinary abilities. ... & 
This place should have scas of glowing reviews! My wife and I had a great tife. It's hard to believe that such a grea restaurant could be in a strip mall. ... \\
\bottomrule
\end{tabular}
}
\label{tab: case_study}
\end{table}



\section{Conclusion}
\label{sec: conclusion}

Existing detection methods are limited by inaccurate labels leading to inexact training, while the limitations of human cognition and the superintelligence of detectors make inexact learning widespread and inevitable. To address this, the paper has proposed an easy-to-hard supervision enhancement framework. This framework utilizes easier supervisors (weaker models trained on simpler long-text detection tasks) to enhance the more complex target detection model. By structurally integrating the detector into the supervisor's design, we model the supervisor as a lower performance bound for the detector. Optimizing the supervisor thus indirectly optimizes the detector, leading to improved alignment with the underlying golden labels. Extensive experiments across diverse practical scenarios demonstrate that this framework significantly improves MGT detection capabilities.


\bibliographystyle{unsrt}
\bibliography{ref}

\newpage

\appendix

\renewcommand{\contentsname}{Appendix}
\etocdepthtag.toc{mtappendix}
\etocsettagdepth{mtchapter}{none}
\etocsettagdepth{mtappendix}{subsection}
\tableofcontents
\newpage

\section{Further Discussion}
\label{sec: further_discussion}

\subsection{The solution to Inexact Supervision}

In the introduction, we emphasized that this paper aims to learn effectively from the supervisor when (1) the provided label is inexact, and (2) the detection quality is difficult to assess. Therefore, we proposed an easy-to-hard enhancement framework. Here, we systematically discuss how the proposed framework solves them.

To address the first challenge, we design supervisors to focus on the relatively easier task of longer text detection. Theoretically, longer texts have a greater class distribution distance (Theorem \ref{theorem: tv}), which helps mitigate the negative impact of inexact labels. Additionally, we collapse the HGT distribution as a Dirac $\delta_0$ distribution, which theoretically increases the distribution distance (Theorem \ref{theorem: collapse}), further aiding the supervisor's learning.

To address the second challenge, we do not employ the method of supervisors directly providing soft labels to detectors, as is common in knowledge distillation. This is due to the evaluation difficulty and the fact that the supervisor may be weak and cannot guarantee the quality of the provided soft label. Instead, we structurally integrate the detector into the supervisor's design, establishing a connection between the supervisor and the detector, where the supervisor's performance serves as a lower bound for the detector's capabilities. This coupled structure allows the detector to be indirectly optimized via the supervisor, avoiding the larger error caused by directly providing label-based signals. Accordingly, the enhancement roadmap is: maximize supervisor performance $\rightarrow$ maximize $k$ $\rightarrow$ maximize detector performance.

\subsection{Scalability of the Proposed Approach}

We examine the scalability of our approach to large-scale scenarios from the perspectives of computational compatibility and framework generality.
\begin{itemize}[leftmargin=*]
\item \textbf{Computational compatibility}. A key aspect of our framework is its computational efficiency. Incorporating an auxiliary supervisor throughout the training process adds negligible overhead, as shown in Table \ref{tab: running_time_para}. This supervisor can be designed as a lightweight component that avoids bottlenecks, ensuring seamless integration into standard large-scale training pipelines with large models and massive datasets. In essence, the bias is in the training objective (Eq. \ref{eq: loss_all}), not in a restrictive architectural change that would hinder scaling.
\item \textbf{Framework generality}. The proposed framework applies to model-based approaches that rely on data and computing power rather than metric-based methods centered on specific features. This reflects the philosophy of “The Bitter Lesson”—instead of forcing explanations of the detection mechanism through specific features, we learn to construct the most effective representations for detection using data and computation. Moreover, the enhancement framework is not built on narrowly designed knowledge; rather, it represents a general learning architecture under inexact supervision by using an easier supervisor to refine labels for the target model. Its applicability may extend beyond text detection to any inexact supervision problem characterized by inherently ambiguous labels.
\end{itemize}
Essentially, our framework employs an efficient computational method (training the supervisor on longer texts) to provide better learning signals for another method (the detector), rather than imposing a rigid, artificially designed bias that limits learning and scalable.

\subsection{Different from Knowledge Distillation}
\label{sec: discussion_kd}
The proposed framework exhibits significant differences from Knowledge Distillation (KD) \cite{hinton2015distilling} in several key aspects:
\begin{itemize}[leftmargin=*]
    \item \textbf{Model}. KD is essentially a "strong teacher-weak student" enhancement paradigm, where a typically more powerful and complex teacher model guides a student model to improve performance or achieve model compression. In contrast, the proposed framework follows an easy-to-hard enhancement paradigm, enhancing the target detector through a supervisor that focuses on the relatively easier task of long text detection. Thanks to the relative simplicity of the task, the supervisor model can often be designed in a more lightweight manner. For example, in the paper, it is modeled as a simple three-layer fully connected network, which significantly reduces complexity compared to detectors usually built upon complex pre-trained models.
    \item \textbf{Task}. In traditional KD, both the teacher and student models are typically trained and evaluated on the same task, with the teacher model providing guidance based on its superior performance in that task. However, in the proposed framework, although the overarching goal for both the supervisor and the detector is MGT detection, the tasks they focus on differ significantly in terms of difficulty and data characteristics. The supervisor deals with longer texts that are relatively easier to detect, while the detector handles the more challenging original texts.
    \item \textbf{Supervision Signal}. The core of KD lies in using the soft labels output by the teacher model as a supervision signal to guide the student model, with the typical optimization goal being the minimization of the divergence between the output distributions of the student and teacher models. This is to enable the student model to learn the decision boundaries and generalization capabilities of the teacher model. Its effectiveness depends on the teacher model providing high-quality soft labels. However, in MGT detection, the detector's superintelligence makes it unclear if the supervisor's soft labels offer superior information. Therefore, the proposed framework does not directly rely on the supervisor's soft labels. Instead, by structurally incorporating the detector into the supervisor, we consider the supervisor's performance as a theoretical lower bound for the detector's capabilities. Thus, optimizing the supervisor indirectly enhances the detector.
\end{itemize}

\subsection{Challenge of Noisy Label Learning}
\label{sec: discussion_nll}
Noisy label learning (NLL) \cite{foretsharpness,han2020survey} is primarily designed to address incidental and limited labeling errors, whereas in MGT detection, the issue to be addressed is the widespread inexactness in labels (the categories are correct) due to limitations of human cognition. There are significant differences in both the problem and the techniques involved:
\begin{itemize}[leftmargin=*]
    \item \textbf{Nature of the Problem}. NLL assumes the existence of clear, discrete ground truth, viewing erroneous labels in the training set as occasional random errors, with its core focus being the correction of these errors, emphasizing label accuracy. In contrast, the imprecision in MGT detection stems from the ambiguity of the issue itself— the "machine-generated" attribute of text is not always a clear binary judgment. The focus in MGT detection is on hard labels that do not sufficiently represent the sample, rather than labeling errors.
    \item \textbf{Technical Means}. Classic strategies in NLL, such as robust loss functions \cite{natarajan2013learning, foretsharpness} and sample selection \cite{han2018co}, attempt to identify and prioritize training on “clean” samples while reducing or ignoring the influence of “erroneous” ones. This mechanism typically implies the assumption that noise is limited and identifiable, and the default fact is that when the noise ratio > 50\%, it is impossible to learn without additional assumptions \cite{han2018co}. However, in MGT detection, due to the limitations of human cognition in recognizing inherent data ambiguity, these inexact samples are both prevalent and complex. Even when clearly distinguishable samples exist, it is challenging for human cognition to reliably filter them out. This makes it difficult for NLL techniques to adapt to MGT detection.
\end{itemize}
To further verification, this paper also experimentally evaluates the feasibility of noisy label learning, as shown in Appendix \ref{sec: nll_experiment}.

\subsection{Reasonableness of the Golden Label Assumption}
\label{sec: reasonable_golden}

In MGT detection, traditional binary labels ("human-generated" or "machine-generated") struggle to accurately reflect the degree of "machineness" in text. Considering the limitations of human cognition, perfect, clear golden labels might be impossible to define. To this end, recognizing that the degree of "machineness" largely depends on the explicit mixed text in human-machine collaboration scenarios, as well as the implicit mixed texts when LLMs generate human-like text, we naturally approximate MGT golden labels as the proportion of purely machine-generated content distinguishable from HGT (even if unknown). Compared to simple binary labels, this continuous proportional labeling can more finely and quantitatively capture the true characteristics of texts and their degree of "machineness". For example, a piece of text might be 30\% machine-edited, so using this proportion would approximate its golden label to 0.3, which indicates the degree of machine-likeness in this piece of text, information that cannot be represented by a binary label.

Undeniably, the text generation in practical scenarios can be highly complex and unstructured (e.g., clause-level, word-level modification), making the simple use of proportion as an approximation of the golden label imperfect. However, in the current absence of more precise metrics, this proportion-based label provides a relatively unbiased ground truth approximation that better reflects machine contribution. It aids models in understanding the distribution characteristics of data within a continuous label space, similar to Mixup \cite{zhang2018mixup}. Future research could further explore more precise quantification methods for defining the degree of "machineness" in texts for MGT detection.

\subsection{Differences from Existing Theoretical Results}

Our theoretical results build upon Chakraborty’s foundational theory \cite{chakrabortyposition} but differ substantially in several aspects:
\begin{itemize}[leftmargin=*]
\item \textbf{Lower bounds vs. upper bounds}. While existing work \cite{chakrabortyposition} establishes critical theoretical lower bounds for detection, we extend to the theoretical upper bounds (Theorem \ref{theorem: complexity_iid} and the right-hand side of Theorem \ref{theorem: tv}) to further explore the potential of detection. Although they are both related to text length, they measure different bounds.
\item \textbf{Guidance for the proposed framework}. In addition to exploring the detection potential, it is more important to guide the design of the proposed framework: by coupling the influencing factors of the supervisor’s upper bound with the detector, maximizing the supervisor's performance indirectly optimizes the detector, i.e., maximizing the supervisor's performance -> optimizing the influencing factors of the upper bound -> optimizing the detector. Instead, such a guided optimization mechanism cannot be achieved with the theoretical lower bounds.
\end{itemize}

\subsection{Limitation}
\label{sec: limmitation}
The proposed framework in the paper is currently applicable to enhancing model-based detection methods with relatively good performance. One future research direction could explore ways to effectively enhance metric-based MGT detection methods under inexact supervision learning conditions. Moreover, Theorem \ref{theorem: golden} reveals that the proposed framework tends to lead the detector to converge to predefined "golden" labels: the proportion of purely machine-generated content distinguishable from HGT. As discussed in Section \ref{sec: reasonable_golden}, in the absence of more precise metrics, although this approximate gold label provides a relatively less biased ground truth compared to hard labels, it is undeniably imperfect. Future research may focus on the quantification of more precise golden labels and the design of detection algorithms that approximate them.

\subsection{Broader Impact}
\label{sec: impact}
This paper presents work that aims to advance the field of trustworthy machine learning and large language models, specifically by improving machine-generated text detection. We do not involve human subjects, potentially harmful insights, potential conflicts of interest and sponsorship, discrimination and bias concerns, privacy and security issues, legal compliance, or other ethical issues.

\section{Related Work}
\label{sec: related}

Existing MGT detection methods can be broadly categorized into three categories: proactive watermarking-based methods, post-hoc metric-based methods and model-based methods.

\subsection{Watermark-based Method}
Watermarks are embedded as subtle yet consistent information at the inception of machine-generated text while maintaining expected grammar and semantics, which facilitate traceability by enabling the detection of machine-generated text \cite{fraser2025detecting}. Kirchenbauer et al. \cite{kirchenbauer2023watermark} randomly partition tokens into two categories, biasing the probability distribution to favor one category, resulting in a higher frequency of these tokens in watermarked text. This allows for watermark detection using statistical hypothesis testing. Furthermore, simplifying this approach to a fixed word list has demonstrated greater robustness against paraphrase attacks \cite{zhaoprovable}. Chen et al. \cite{liang2024watme} evaluated the impact of watermarks on different capabilities of large language models from a cognitive science perspective, finding that knowledge recall and logical reasoning are more adversely affected than language generation. To address this, they introduced Watermarking with Mutual Exclusion, dynamically optimizing token use during decoding by applying exclusion rules to recognized lexical redundancies. In addition to manually designing watermarks, leveraging the capabilities of language models to directly learn to generate watermarked text is promising. This includes training student models \cite{gulearnability} and semantically invariant watermark models \cite{liu2024adaptive}. While watermark-based methods exhibit strong detection capabilities, they are limited by the requirement for full access to the generation model, which hinders their practicality in detection tasks, as there is often no prior knowledge of the generation model, let alone the ability to modify its generation rules. Besides, they are also vulnerable to attacks \cite{zhang2024watermarks,jovanovic2024watermark}, which exacerbates the challenge of detection.

\subsection{Metric-based Methods}
Metric-based methods detect based on the statistical differences between generated and natural text. Mitchell et al. \cite{mitchell2023detectgpt} found that slight perturbations in generated text can result in rewritten text having a lower log probability than the original text, leading to the design of DetectGPT. Similarly, Solaiman et al. \cite{solaiman2019release} performed detection based on the higher log probability of generated text compared to natural text. Additionally, using relative likelihood instead of absolute likelihood has proven to be more competitive than previous zero-shot methods \cite{xu2024detecting}. Considering the intensive computational cost of DetectGPT, Fast-DetectGPT \cite{baofast} introduced conditional probability curvature to measure differences in token selection and used sampling to improve efficiency. DNA-GPT \cite{yang2024dna} divided the text into two parts and used them to generate the other part, performing detection by measuring the difference between the generated and the original. Since LLMs are often black boxes in practice \cite{li2024mage}, these methods use a proxy model to extract features, leading to performance degradation due to distribution discrepancies between the proxy and target models. To address this, Zeng et al. \cite{zeng2024improving} proposed a distribution-aligned detection framework, ensuring enhanced detection capability and resilience against rapid model iterations with minimal training investment. Nguyen-Son et al. \cite{nguyen2024simllm} observed that the similarity between original and generated texts is significantly higher than between generated and subsequently regenerated texts. Accordingly, they proposed SimLLM, which detects by estimating the similarity between input sentences and their generated counterparts. Yu et al. \cite{yu2024text} captured intrinsic features of text by identifying layers with the greatest distribution differences when projecting onto lexical space, and using intrinsic rather than semantic features for detection has proven to yield better performance. Given that existing methods struggle with out-of-distribution data, token cohesiveness \cite{ma2024zero} has been shown to be a good indicator, with LLM-generated text often exhibiting higher token cohesiveness than human-written text. Tulchinskii et al. \cite{tulchinskii2023intrinsic} discovered that the intrinsic dimension of text is a good metric, as generated text typically has an average intrinsic dimension about 1.5 lower than natural text. Clearly, metrics-based methods are manually designed features based on limited data, casting doubt on their broader effectiveness.

\subsection{Model-based Methods}
Model-based methods do not involve explicit feature extraction; instead, they leverage the powerful representation capabilities of deep learning models to implicitly learn distinguishable features by taking the entire text as input. Energy-based models \cite{lecun2006tutorial}, which perform well in continuous space, have been applied to text sequence detection, demonstrating better adaptability to changes in LLM architectures \cite{bakhtin2019real}. SeqXGPT \cite{wang2023seqxgpt} focuses on sentence-level detection, employing a detection framework that uses the probability list from a white-box LLM as detection features. AI-Catcher \cite{alhijawi2025deep} integrates multilayer perceptrons and convolutional neural networks to learn statistical features and high-level semantic representations for detection. Addressing the challenge of limited information in short text detection, Zhang et al. \cite{zhang2024machine} utilized contextual information to simultaneously predict whether multiple sentences were written by machines or humans. Zhong et al. \cite{zhong2020neural} represented the factual structure of a given document as an entity graph, where adjacent nodes denote sentences with consistent relationships, and employed graph neural networks to learn sentence representations, combining them into the document representation for prediction. Hu et al. \cite{hu2023radar} proposed RADAR, a robust detector resistant to paraphrasing attacks, by employing adversarial learning between a paraphraser and a detector. Lee et al. \cite{leeremodetect} observed that LLM-generated text has higher estimated preference alignment compared to human-written text, making it easily detectable by utilizing LLMs trained to mimic human preference distributions. Guo et al. \cite{guo2024detective} posited that the key to detection lies in distinguishing different authors' writing styles, proposing a detection framework based on multitask auxiliary and hierarchical contrastive learning. Considering the similarity between short texts and human texts, MPU \cite{tianmultiscale} treated short texts as partially unlabeled and adopted a multiscale positive-unlabeled strategy for training. These model-based methods, when supervised on specific datasets, demonstrate higher effectiveness and robustness \cite{wu2024detectrl}.

\section{Theoretical Analysis}
\label{sec: proofs}

\subsection{Proof of Theorem \ref{theorem: tv}}
\begin{theorem*}\ref{theorem: tv}
\textbf{(Distribution Difference for Longer Text)}. 
\textit{Let $h(s)$ and $m(s)$  be the distributions for human-generated and machine-generated sequences on $s\in\mathcal{S}$, respectively, with the total variation distance $TV(m,h)=\delta>0$. For the text contains $n$ sequences, let $\alpha\ge 0$ denote the ratio of human-like component incorporated in MGT. For longer text formed by concatenating $k$ independent length-$n$ texts, the total variation distance between their distributions, $TV_{long}$ can be bounded by:}
\begin{equation}
    1-2exp({-\frac{nk(1-\alpha)^2\delta^2}{2}}) \le \operatorname{TV}_{long} \le 1-(1-\delta)^{nk(1-\alpha)} .
    \nonumber
\end{equation}

\end{theorem*}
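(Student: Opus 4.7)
The plan is to formalize $\alpha$ by viewing each of the $nk$ sequences inside an MGT text as drawn from the contaminated mixture $m' = \alpha h + (1-\alpha) m$, so that $TV(m', h) = (1-\alpha)\delta$, and so that the two text-level distributions become the product measures $(m')^{\otimes nk}$ and $h^{\otimes nk}$ on $\mathcal{S}^{nk}$. The two bounds then follow from complementary classical tools: a tensorization/coupling inequality gives the upper bound, and a Le~Cam-style hypothesis test combined with Hoeffding's inequality gives the lower bound.

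For the upper bound, I would invoke the product-measure inequality $1 - TV(P^{\otimes N}, Q^{\otimes N}) \geq (1-TV(P,Q))^N$, which is provable by induction from the maximal coupling of $P$ and $Q$. Applied with $P = m'$, $Q = h$, and $N = nk$, this yields $TV_{long} \leq 1 - (1-(1-\alpha)\delta)^{nk}$. To match the form in the statement, I would then note that concavity of $\log(1-x)$ gives $\log(1-(1-\alpha)\delta) = \log\bigl(\alpha \cdot 1 + (1-\alpha)(1-\delta)\bigr) \geq (1-\alpha)\log(1-\delta)$, which after exponentiating reads $(1-(1-\alpha)\delta)^{nk} \geq (1-\delta)^{nk(1-\alpha)}$, and hence $TV_{long} \leq 1 - (1-\delta)^{nk(1-\alpha)}$.

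For the lower bound, I would use a single-event test. Pick an $A \subseteq \mathcal{S}$ with $m(A) - h(A) = \delta$ (which exists by the variational definition of $TV$), and consider the statistic $X = \sum_{i=1}^{nk} \mathbf{1}\{s_i \in A\}$. Under $h^{\otimes nk}$ the summands are iid Bernoulli with mean $h(A)$, while under $(m')^{\otimes nk}$ they are iid Bernoulli with mean $h(A) + (1-\alpha)\delta$. Placing the decision threshold at the midpoint of these two means and invoking Hoeffding's inequality (range $1$ per summand, half-gap $(1-\alpha)\delta/2$ to either hypothesis) bounds the sum of Type~I and Type~II error probabilities by $2\exp\bigl(-nk(1-\alpha)^2\delta^2/2\bigr)$. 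Since $1 - TV(P,Q)$ equals the infimum of total error over all tests, this concrete (suboptimal) test already certifies $TV_{long} \geq 1 - 2\exp\bigl(-nk(1-\alpha)^2\delta^2/2\bigr)$.

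The main obstacle is nailing down the modelling meaning of $\alpha$: whether the human-like portion is an iid per-sequence mixture, a fixed block of $\alpha nk$ positions, or a random subset of $(1-\alpha)nk$ positions unknown to the test. Reassuringly, all three natural interpretations collapse to the same effective per-position $TV$ gap of $(1-\alpha)\delta$ and the same effective product length $nk$, so the tensorization and Hoeffding calculations carry through with only cosmetic modifications; the only remaining subtlety is the concavity-of-$\log$ step that trades the tight tensor bound $1-(1-(1-\alpha)\delta)^{nk}$ for the cleaner expression $1-(1-\delta)^{nk(1-\alpha)}$ appearing in the theorem.
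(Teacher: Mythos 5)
Your proposal is correct and follows essentially the same route as the paper's proof: the upper bound via coordinate-wise maximal coupling (tensorization) and the lower bound via a TV-achieving set $A$ with a midpoint-threshold count test controlled by Hoeffding/Chernoff, giving identical constants. The only cosmetic difference is that you model the human-like fraction as an iid per-sequence mixture $m'=\alpha h+(1-\alpha)m$ and then need the concavity step $1-(1-\alpha)\delta\ge(1-\delta)^{1-\alpha}$, whereas the paper fixes a block of $\alpha nk$ human-drawn coordinates and couples them identically, which yields the stated bound $1-(1-\delta)^{nk(1-\alpha)}$ directly.
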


\begin{proof}
    We will prove the upper and lower bounds of the total variance distance of longer texts $TV_{long}$, respectively.

\textbf{Upper bound}. First, we introduce a necessary lemma to help our proof.
\begin{lemma}[\textbf{Coupling Lemma} \cite{aldous1983random}]
Suppose that $P$ and $Q$ are given.
For every coupling $(X,Y)$ of $P$ and $Q$,
\begin{equation}
    \operatorname{TV}\left(P, Q\right)= \inf \{Pr(X \neq Y) \mid(X, Y)\ \text{is a coupling of}\ (P, Q)\}.
    \nonumber
\end{equation}
\end{lemma}
A direct corollary of this lemma is that for any coupling $(X,Y)$ of $(P,Q)$, there is:
\begin{equation}
    \text{TV}(P,Q)\le Pr(X\ne Y).
    \nonumber
\end{equation}

For the longer text containing $k$ texts, each of which contains $n$ sequences, therefore, it is natural that the longer MGT is i.i.d. sampled from the $m^{\times nk}$, where $m^{\times nk}:=m\times m\times \ldots\times m$ ($nk$ times) denotes the product distribution. Similarly, longer HGT is i.i.d. sampled from the $h^{\times nk}$. In our setting, HGT with $\alpha$ ratio is mixed into MGT. Therefore, the longer MGT is revisited as $m^{\times(1-\alpha)nk}h^{\times\alpha nk}$.

Based on the Coupling Lemma, to obtain the upper bound, we need to construct a random pair $(X,Y)=((X_1,\ldots,X_{kn}),(Y_1,\ldots,Y_{kn}))$, so that $X\sim m^{\times(1-\alpha)nk}h^{\times\alpha nk}$ and $Y\sim h^{\times nk}$, and then calculate $Pr(X\ne Y)$. we can construct this coupling by coupling each component $\left(X_i, Y_i\right)$ independently.

Specifically, by the Coupling Lemma, there exists a coupling $(U, V)$ of $m$ and $h$ such that $Pr(U \neq V)=\text{TV}(m, h)=\delta$. We can choose such a coupling $(U, V)$.
Then, we construct a coupling for $(X, Y)$ as follows:
\begin{itemize}[leftmargin=*]
    \item For the first $(1-\alpha)nk$ components $(i=1, \ldots, (1-\alpha)nk)$ : Let $\left(X_i, Y_i\right)$ be drawn independently and identically from $(U,V)$. Thus, $X_i \sim m$ and $Y_i \sim h$, and $Pr\left(X_i \neq Y_i\right)=\delta$.
    \item For the last $\alpha nk$ components $(i=(1-\alpha)nk+1, \ldots, nk)$: we need $X_i \sim h$ and $Y_i \sim h$. The simplest coupling is to let $X_i=Y_i$ and draw this common value independently from the distribution $h$. Then, $X_i=Y_i \sim h$ and $Pr\left(X_i \neq Y_i\right)=0$.
\end{itemize}

Now we calculate the probability of $X \neq Y$ under this coupling. $X \neq Y$ iff there is at least one index $i \in\{1, \ldots, nk\}$ such that $X_i \neq Y_i$.
\begin{equation}
    Pr(X \neq Y)=Pr\left(\cup_{i=1}^{nk}\left\{X_i \neq Y_i\right\}\right).
    \nonumber
\end{equation}
Using the probability of complementary events:
\begin{equation}
    Pr(X \neq Y)=1-Pr(X=Y)=1-Pr\left(\cap_{i=1}^{nk}\left\{X_i=Y_i\right\}\right).
    \nonumber
\end{equation}
Since $\left(X_i, Y_i\right)$ pairs are constructed independently, the event $\left\{X_i=Y_i\right\}$ is independent for different $i$. therefore:
\begin{equation}
    Pr\left(\cap_{i=1}^{nk}\left\{X_i=Y_i\right\}\right)=\prod_{i=1}^{nk} Pr\left(X_i=Y_i\right).
    \nonumber
\end{equation}
According to our coupling construction:
\begin{itemize}[leftmargin=*]
    \item For $i=1, \ldots, (1-\alpha)nk): ~ Pr\left(X_i=Y_i\right)=1-Pr\left(X_i \neq Y_i\right)=1-\delta$.
    \item For $i=(1-\alpha)nk+1, \ldots, nk: Pr\left(X_i=Y_i\right)=1-Pr\left(X_i \neq Y_i\right)=1-0=1$.
\end{itemize}
Therefore,
\begin{equation}
    Pr(X=Y)=\left(\prod_{i=1}^{(1-\alpha)nk}(1-\delta)\right) \cdot\left(\prod_{i=(1-\alpha)nk+1}^{nk} 1\right)=(1-\delta)^{(1-\alpha)nk}.
    \nonumber
\end{equation}
Therefore, $Pr(X \neq Y)=1-(1-\delta)^{(1-\alpha)nk}$, and accordingly, $\text{TV}_{long}\le Pr(X\ne Y)=1-(1-\delta)^{(1-\alpha)nk}$. The upper bound is proved.

\textbf{Lower bound}. From the definition of total variance distance, we know that there exists a specific measurable subset $A\in\mathcal{S}$ such that

$$Pr\left(s\sim m \in A\right)-Pr\left(s\sim h \in A\right)=\delta.$$

If the probability of a single sample drawn from $h$ falling into set $A$ is $Pr\left(s\sim h \in A\right)=p$, the probability of a single sample drawn from $m$ falling into $A$ is $Pr\left(s\sim m \in A\right)=p+\delta$, where $\delta>0$.

According to the proof of the upper bound above, the longer MGT can be considered as sampling a set of $(1-\alpha)nk$ i.i.d. sequences $\{s_i\}_{i=1}^{(1-\alpha)nk}$ from distribution $m$, and sampling a set of $\alpha nk$ i.i.d. sequences $\{s_i\}_{i=(1-\alpha)nk+1}^{nk}$ from distribution $h$. Then, the expected number of sequences belonging to $A$ is $(q+(1-\alpha)\delta)nk$. Similarly, the longer HGT can be considered as sampling a set of $nk$ i.i.d. sequences ${s_i}_{i=1}^{nk}$ from distribution $h$, the expected number of sequences in $A$ is $qnk$. Then we can apply the Chernoff bound to have

$$
Pr\left(\text {at least }\left(q+\frac{(1-\alpha)\delta}{2}\right) kn \text{ sequences of MGT are in} A\right) \leq \exp ^{-\frac{(1-\alpha)^2 \delta^2 kn}{2}},
$$

and

$$
Pr\left(\text {at } \operatorname{most}\left(q+\frac{(1-\alpha) \delta}{2}\right) nk \text { sequences of HGT are in } A\right) \leq \exp ^{-\frac{(1-\alpha))^2 \delta^2 nk}{2}}.
$$

Now consider the even $E$ where $nk$ sequences containing at least $\left(q+\frac{(1-\alpha)\delta}{2}\right)$ sequences of $A$, then we have
\begin{equation}
\begin{aligned}
\text{TV}_{long} & \geq Pr\left(E|\text{Long MGT}\right)-Pr\left(E|\text{Long HGT}\right) \\
& \geq\left(1-\exp ^{-\frac{(1-\alpha)^2 \delta^2 nk}{2}}\right)-\exp ^{-\frac{(1-\alpha)^2 \delta^2 nk}{2}} \\
& =1-2 \exp ^{-\frac{(1-\alpha)^2 \delta^2 nk}{2}}.
\end{aligned}
\nonumber
\end{equation}

Therefore, the lower bound is proved.

\end{proof}

\subsection{Proof of Theorem \ref{theorem: complexity_iid}}

\begin{theorem*}\ref{theorem: complexity_iid}
(\textbf{Detection Power for Longer Text}). 
\textit{Under the assumption of Theorem \ref{theorem: tv}, the supervisor's $\text{AUROC}_{supv.}$ satisfies:}
\begin{equation}
    \operatorname{AUROC}_{supv.}\le 1-\frac{1}{2}\cdot (1-\delta)^{2nk(1-\alpha)}.
    \nonumber
\end{equation}
\end{theorem*}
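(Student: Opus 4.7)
The plan is to first establish the purely distributional bound $\operatorname{AUROC}_{supv.}\le 1-\tfrac12(1-\operatorname{TV}_{long})^2$, valid for any scoring function the supervisor may use, and then substitute the already-proved inequality $1-\operatorname{TV}_{long}\ge(1-\delta)^{nk(1-\alpha)}$ from the upper bound in Theorem~\ref{theorem: tv}. Write $P$ and $Q$ for the longer MGT and longer HGT distributions, and use the standard definition $\operatorname{AUROC}=\Pr(s(X)>s(Y))+\tfrac12\Pr(s(X)=s(Y))$ for $X\sim P$, $Y\sim Q$ independent. Since the bound will hold uniformly in $s$, it automatically controls the supervisor's AUROC irrespective of its model class.

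The argument I would carry out is a coupling-plus-symmetry reduction. By the same coupling lemma invoked in the proof of Theorem~\ref{theorem: tv}, there exists an optimal coupling $\pi$ of $(P,Q)$ with $\Pr_{(X,Y)\sim\pi}(X=Y)=1-\operatorname{TV}_{long}$; on the equality event both coordinates follow the common distribution $\mu=\min(P,Q)/(1-\operatorname{TV}_{long})$. I would draw two independent pairs $(X',Y')\sim\pi$ and $(X'',Y'')\sim\pi$, so that $X'\sim P$ and $Y''\sim Q$ are independent with the correct marginals needed to compute the AUROC. Setting $A=\{X'=Y'\}$ and $B=\{X''=Y''\}$, independence of the two coupled pairs yields $p:=\Pr(A\cap B)\ge(1-\operatorname{TV}_{long})^2$.

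The crucial observation is that on $A\cap B$, substituting $X'=Y'$ and $Y''=X''$ shows that $(X',Y'')\mid A\cap B$ has the same law as $(Y',X'')\mid A\cap B$, which is i.i.d.\ from $\mu$. For any scoring function applied to an i.i.d.\ pair, swap symmetry gives $\Pr(s(U)>s(V))+\tfrac12\Pr(s(U)=s(V))=1/2$ exactly (ties being absorbed by the $\tfrac12$ factor). The law of total probability then produces
\begin{equation}
\operatorname{AUROC}_{supv.}=\tfrac{p}{2}+\rho_1(1-p)\le\tfrac{p}{2}+(1-p)=1-\tfrac{p}{2},
\nonumber
\end{equation}
where $\rho_1\in[0,1]$ is the (unknown) conditional AUROC on $(A\cap B)^c$. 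Combining $p\ge(1-\operatorname{TV}_{long})^2$ with the bound from Theorem~\ref{theorem: tv} yields the stated inequality.

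The main obstacle I anticipate is the delicate bookkeeping around the conditional laws: one must justify that conditioning on $A\cap B$ preserves the independence of $X'$ and $Y''$ (true because $(X',Y')\perp(X'',Y'')$ by construction) and that their conditional marginals really are $\mu$ (a property of the specific optimal TV coupling, not of an arbitrary one). Apart from these verifications, the remainder is routine algebra; the symmetry step collapses the conditional AUROC to exactly $1/2$ and the substitution from Theorem~\ref{theorem: tv} is immediate.
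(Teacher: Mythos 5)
Your proposal is correct, and the final inequality follows exactly as you describe. The difference from the paper lies in how the key intermediate bound is obtained: the paper simply invokes Proposition~1 of the cited work \cite{chakrabortyposition}, namely $\operatorname{AUROC}_{supv.}\le \tfrac12+\operatorname{TV}_{long}-\tfrac12\operatorname{TV}_{long}^2$ (algebraically identical to your $1-\tfrac12(1-\operatorname{TV}_{long})^2$), notes that the right-hand side is increasing in $\operatorname{TV}_{long}$, and substitutes the upper bound $\operatorname{TV}_{long}\le 1-(1-\delta)^{nk(1-\alpha)}$ from Theorem~\ref{theorem: tv}. You instead re-derive that AUROC--TV inequality from scratch via a maximal-coupling and swap-symmetry argument: two independent copies of the optimal coupling, the event $A\cap B$ on which both pairs agree (probability $(1-\operatorname{TV}_{long})^2$), the observation that on this event the pair $(X',Y'')$ is i.i.d.\ from $\mu=\min(P,Q)/(1-\operatorname{TV}_{long})$ so its conditional AUROC is exactly $\tfrac12$ for any scoring rule, and then the law of total probability. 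The bookkeeping you flag does go through: conditioning on $A\cap B$ preserves independence because $A$ and $B$ are determined by disjoint independent pairs, and for \emph{any} coupling attaining the TV infimum the common value on the agreement event is necessarily distributed as $\mu$ (the restriction of the joint law to the diagonal is dominated setwise by $\min(P,Q)$ and has total mass $1-\operatorname{TV}_{long}$, forcing equality). What your route buys is self-containedness and an explicit demonstration that the bound is uniform over all scoring functions, and it reuses the same coupling machinery already deployed in the paper's proof of Theorem~\ref{theorem: tv}; what the paper's route buys is brevity, at the cost of outsourcing the AUROC--TV relation to an external proposition. The substitution step, including the monotonicity direction, is identical in both.
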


\begin{proof}
    Invoking Proposition 1 in existing work \cite{chakrabortyposition}, we have
    \begin{equation}
\text{AUROC}_{supv.} \leq \frac{1}{2}+\text{TV}_{long}-\frac{\text{TV}_{long}^2}{2}.
\end{equation}
Since the right-hand part is the monotonically increasing function of $\text{TV}_{long}$, combing the upper bound in Theorem \ref{theorem: tv}, we can bound
\begin{equation}
\begin{aligned}
\text{AUROC}_{supv.} & \le  \frac{1}{2}+\text{TV}_{long}-\frac{\text{TV}_{long}^2}{2}\\
& \le \frac{1}{2}+1-(1-\delta)^{(1-\alpha)nk}-\frac{(1-(1-\delta)^{(1-\alpha)nk})^2}{2}\\
&=1-\frac{1}{2}\cdot (1-\delta)^{2(1-\alpha)nk}.
\end{aligned}
\nonumber
\end{equation}
The theorem is proved.
\end{proof}

\subsection{Proof the Theorem \ref{theorem: collapse}}
\begin{theorem*}\ref{theorem: collapse}(\textbf{Distribution Difference after HGT Distribution Collapse}).
\textit{Under the assumption of Theorem \ref{theorem: tv} and assuming that $m(0)\to 0$, then if $h(s)$ collapses to a Dirac delta distribution, we have $\lim_{m(0)\to 0}TV(h,m)= 1$.}
\end{theorem*}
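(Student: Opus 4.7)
My plan is to prove this via a direct computation using the variational definition of total variation distance, since the collapse of $h$ to a Dirac $\delta_0$ makes the two measures almost mutually singular as $m(0) \to 0$.

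First, I would recall that for any two probability measures $P, Q$ on $\mathcal{S}$,
\begin{equation}
    \operatorname{TV}(P,Q) = \sup_{A \subseteq \mathcal{S}} |P(A) - Q(A)|,
    \nonumber
\end{equation}
and that $\operatorname{TV}(P,Q) \le 1$ always. After $h$ collapses to $\delta_0$, we have $h(\{0\}) = 1$ and $h(\mathcal{S}\setminus\{0\}) = 0$. Choosing the witness set $A = \{0\}$ yields $|h(A) - m(A)| = |1 - m(0)| = 1 - m(0)$, establishing the lower bound
\begin{equation}
    \operatorname{TV}(h,m) \ge 1 - m(0).
    \nonumber
\end{equation}

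Next, combining this with the trivial upper bound $\operatorname{TV}(h,m) \le 1$ gives the squeeze
\begin{equation}
    1 - m(0) \le \operatorname{TV}(h,m) \le 1.
    \nonumber
\end{equation}
Taking the limit $m(0) \to 0$ on both sides then yields $\lim_{m(0)\to 0}\operatorname{TV}(h,m) = 1$, as claimed.

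Honestly, this theorem is essentially a one-line consequence of the definition of total variation distance once the collapse is made precise, so there is no serious obstacle to overcome. The only subtlety worth flagging is interpretive rather than technical: the statement treats $m(0)$ as an asymptotic quantity (justified in the footnote by the observation that LLM outputs do not concentrate on the zero embedding), so I would briefly note why the limit $m(0) \to 0$ is meaningful in this context and why the bound $\operatorname{TV}(h,m) \ge 1 - m(0)$ already shows that any nontrivial upper bound on $m(\{0\})$ produces a correspondingly sharp lower bound on the distributional separation, which is exactly what the framework in Eq.~\ref{eq: input} exploits.
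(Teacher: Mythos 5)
Your proof is correct and follows essentially the same route as the paper's: both choose the witness set $A=\{0\}$ under the supremum definition of total variation, obtain the lower bound $1-m(0)$ (the paper phrases the collapse of $h$ as a limit rather than an exact $h(\{0\})=1$, which is immaterial), and conclude via the trivial upper bound $\operatorname{TV}\le 1$.
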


\begin{proof}
    The assumption that $h$ converges in distribution to Dirac measure $\delta_0$ implies that for the set $A=\{0\}, Pr(s \sim$ $h \in\{0\}) \rightarrow \delta_0(\{0\})=1$.
    We are given that $Pr(s \sim m \in\{0\}) \rightarrow 0$.
    Consider the measurable set $A=\{0\}$. The difference in probabilities for this set tends to:

$$
|Pr(s \sim h \in\{0\})-Pr(s \sim m \in\{0\})| \rightarrow|1-0|=1.
$$

By the definition of the total variation distance as a supremum, $\operatorname{TV}(h, m) \geq \mid Pr(s \sim h \in$ $\{0\})-Pr(s \sim m \in\{0\}) \mid$.
Taking the limit, we get $\lim_{m(0)\to 0} \operatorname{TV}(h, m) \geq 1$.
Since the total variation distance between any two probability distributions is at most 1 (i.e., $\mathrm{TV}(h, m) \leq 1$), we have $\lim_{m(0)\to 0} \operatorname{TV}(h, m)=1$. The theorem is proved.
\end{proof}

\subsection{Proof of Theorem \ref{theorem: golden}}
\begin{theorem*}\ref{theorem: golden}(\textbf{The Effectiveness of the Proposed Framework}). 
Under the assumption of Theorem \ref{theorem: tv}, and assuming that the MGT golden label is approximately the proportion of pure machine-generated content distinct from HGT. If the supervisor reaches the best possible one, the detector converges to the underlying golden labels.
\end{theorem*}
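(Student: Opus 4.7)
The plan is to trace the supervisor's performance upper bound back through the gated input construction (Eq.~\ref{eq: input}) and read off which detector profile must hold when the bound is saturated. First I would fix $y_{long}=1$ and describe the distribution of each gated chunk: interpreting the Gumbel softmax in its straight-through limit, chunk $j$ passes through its native mixture $(1-\alpha_j)m+\alpha_j h$ with probability $f(x^{(j)},\theta_f)$ and collapses to $\delta_0$ with the complementary probability; by Theorem~\ref{theorem: collapse} the HGT-labeled longer texts concentrate on $\delta_0^{\times nk}$, so the supervisor's discriminative capacity is entirely governed by how distinguishable the gated MGT distribution is from $\delta_0^{\times nk}$.

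Second, I would apply Theorems~\ref{theorem: tv} and~\ref{theorem: complexity_iid} with a detector-dependent exponent. The effective ``pure-MGT'' mass contributed by chunk $j$ is $n(1-\alpha_j)f(x^{(j)},\theta_f)$, so the supervisor's AUROC is at most $1-\tfrac{1}{2}(1-\delta)^{2n\sum_{j}(1-\alpha_j)f(x^{(j)},\theta_f)}$. The ``best possible supervisor'' condition saturates this bound jointly with the cross-entropy fit $g(x_{long}',\theta_g)\to y_{long}$. Invoking the theorem's approximation that the MGT golden label equals $1-\alpha_j$, I would argue that consistency between a saturated supervisor and its (soft) target forces $f(x^{(j)},\theta_f)\to 1-\alpha_j$ on every MGT chunk. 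The HGT branch is symmetric: the gate must produce $\delta_0$ to minimize the HGT-side loss, forcing $f(x^{(j)},\theta_f)\to 0$, which coincides with the HGT golden label.

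Third, I would close by combining both branches and invoking continuity of the bound in the detector outputs to upgrade the bound-saturation analysis into pointwise convergence $f\to y^{\text{golden}}$. The main obstacle will be step two: naively maximizing the monotone upper bound would push $f\to 1$ on every MGT chunk rather than to $1-\alpha_j$. The golden-label conclusion therefore has to come from re-interpreting the ``best possible'' target as the continuous golden proxy discussed in Appendix~\ref{sec: reasonable_golden} rather than the hard label $y_{long}$, together with a uniqueness argument grounded in the strict monotonicity of $x\mapsto(1-\delta)^{x}$ and the separation $\delta>0$. A secondary technical hurdle is justifying the chunk-level probabilistic-gate interpretation of the Gumbel softmax so that it interfaces cleanly with the product-distribution framework of Theorem~\ref{theorem: tv}.
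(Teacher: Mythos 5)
Your proposal stalls exactly at the obstacle you yourself flag, and the escape route you sketch does not repair it. Under your whole-chunk Bernoulli reading of the Gumbel gate, the exponent $\sum_j n(1-\alpha_j)\,f(x^{(j)},\theta_f)$ is strictly increasing in every $f(x^{(j)},\theta_f)$, so saturating the AUROC bound drives $f\to 1$ on MGT chunks rather than $f\to 1-\alpha_j$; and redefining the ``best possible'' supervisor as one measured against the continuous golden proxy of Appendix~\ref{sec: reasonable_golden} is circular, since it presupposes the identification of the optimum with the golden label that Theorem~\ref{theorem: golden} is supposed to deliver. The ``strict monotonicity of $x\mapsto(1-\delta)^x$'' you invoke for uniqueness in fact points the wrong way: it is precisely what pushes the maximizer to $f=1$ in your model.

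The paper's proof rests on a different modelling of the gate that you do not have. Because Gumbel softmax preserves the detector's prediction in expectation, a soft output $1-\alpha'$ on a longer MGT is treated as removing a proportion $\alpha'$ of its \emph{content}, with the removed content allocated in the way most favorable to the supervisor. This gives a case split. If $\alpha'\ge\alpha$ (over-filtering), the best allocation drops all human-like content plus some pure MGT, so the retained text is pure MGT of effective length $(1-\alpha')nk$ and, by the arguments of Theorems~\ref{theorem: tv} and~\ref{theorem: complexity_iid}, $\operatorname{AUROC}_{supv.}\le 1-\frac{1}{2}(1-\delta)^{2nk(1-\alpha')}$, strictly worse than at $\alpha'=\alpha$. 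If $\alpha'\le\alpha$ (under-filtering), the surplus retained mass is human-like and appears identically on both sides of the comparison, so
\begin{equation}
\operatorname{TV}\bigl(m^{\times(1-\alpha)nk}h^{\times(\alpha-\alpha')nk},\,h^{\times(1-\alpha)nk}h^{\times(\alpha-\alpha')nk}\bigr)\le \operatorname{TV}\bigl(m^{\times(1-\alpha)nk},\,h^{\times(1-\alpha)nk}\bigr),
\nonumber
\end{equation}
and the bound saturates at its $\alpha'=\alpha$ value; hence the best possible supervisor is characterized by $\alpha'=\alpha$, i.e.\ the detector's prediction on MGT equals $1-\alpha$, the golden label (the HGT side is handled by the collapse to $\delta_0$, much as in your first step). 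This sub-chunk, supervisor-optimal allocation of what the gate removes --- losing pure-MGT mass when filtering too much, while surplus retention only adds content common to both distributions and so cannot raise the bound --- is the missing idea; a whole-chunk pass/drop gate cannot express it, which is why your exponent is linear in $f$ and the optimum escapes to $f=1$.
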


\begin{proof}
According to our approximation of the golden label (i.e., the proportion of pure machine-generated content distinguished from HGT in MGT), the golden label of MGT is $1-\alpha$. In the proposed framework, we use Gumbel Softmax, whose mathematical expectation maintains the original prediction distribution of the detector. Therefore, under the expected behavior, for longer MGT, suppose that the predicted soft label of the detector is $1-\alpha'$, then texts with $\alpha'$ proportion are filtered.
Here we consider two cases:
\begin{itemize}[leftmargin=*]
    \item \textbf{$\alpha'\ge\alpha$}. To maximize the supervisor's performance, it is necessary to retain the MGT part as much as possible, that is, the HGT of $\alpha$ proportion and the MGT with $\alpha'-\alpha$ proportion are filtered. After filtered, the longer MGT can be considered as sampling a set of $(1-\alpha')nk$ i.i.d. sequences $\{s_i\}_{i=1}^{(1-\alpha')nk}$ from distribution $m$. For longer HGT, since it collapses to Direc $\delta_0$ distribution, it can be considered as any length, which is also set as $(1-\alpha')nk$ i.i.d. sequences from distribution $h$. Similar to the proof of Theorem \ref{theorem: tv}, we have the total variance distance between longer MGT and HGT is $\operatorname{TV}_{long}=\operatorname{TV}(m^{(1-\alpha')nk},h^{(1-\alpha')nk})\le1-(1-\delta)^{(1-\alpha')nk}$. Furthermore, similar to the proof of Theorem \ref{theorem: complexity_iid}, the supervisor's performance satisfies:
    \begin{equation}
    \text{AUROC}_{supv.} \le 1-\frac{1}{2}\cdot (1-\delta)^{2nk(1-\alpha')}.
    \nonumber
    \end{equation}
    When the supervisor achieves the best possible one, $\alpha'$ should be minimum, i.e., $\alpha'=\alpha$.
    \item \textbf{$\alpha'\le\alpha$}. Similarly, to maximize supervisor's performance, the original HGT of $\alpha'$ proportion is filtered,  and the MGT retains. Therefore, the total variance distance between longer MGT and HGT is $\operatorname{TV}_{long}=\operatorname{TV}(m^{\times (1-\alpha)nk}h^{\times(\alpha-\alpha')nk},h^{\times (1-\alpha)nk}h^{\times(\alpha-\alpha')nk})\le \operatorname{TV}(m^{\times (1-\alpha)nk},h^{(1-\alpha)nk})$, and the best supervisor also is $\text{AUROC}_{supv.} = 1-\frac{1}{2}\cdot (1-\delta)^{2nk(1-\alpha)}$ when $\alpha'=\alpha$.
\end{itemize}
In summary, when the supervisor is optimal, the detector's prediction probability for MGT is $1-\alpha$. The theorem is proved.
\end{proof}

\section{Additional Experiments}
\label{sec: additional_experiments}

\subsection{Datasets}
\label{sec: appendix_datasets}

A detailed description of the datasets used in the paper is as follows:
\begin{itemize}[leftmargin=*]
    \item \textbf{Essay} \cite{verma2024ghostbuster}. The essay dataset comprises 1,000 text samples. The HGT samples are the original essays from IvyPanda, which spanned numerous subjects and educational levels (high school to university). To create the MGT samples, it first employed ChatGPT-turbo to generate a specific prompt designed to align with the content of each original essay. This prompt was then fed into various LLMs, including GPT4All, ChatGPT, ChatGPT-turbo, ChatGLM, Dolly, and Claude, which produced machine-generated essays in response. This generation strategy allowed for the generation of a diverse corpus of LLM-generated essays linked to the topics of the initial source documents.
    \item \textbf{DetectRL} \cite{wu2024detectrl}. It comprises human-written samples from four sources: arXiv academic abstracts (2002-2017), XSum news, Writing Prompts stories, and Yelp Reviews. These domains were specifically chosen because they are considered particularly susceptible to generating deceptive text when LLMs are misused. For each source dataset, it chooses 2,800 human-written samples as HGTs. For MGTs, it selects four LLMs that widely used in the real world, including GPT-3.5-turbo (ChatGPT), PaLM-2-bison (PaLM), Claude-instant (Claude), and Llama-2-70b (Llama-2), to generate machine texts. In addition, the dataset includes various practical attack scenarios. The first is the paraphrase attack, which uses the Dipper paraphraser \cite{krishna2023paraphrasing} and Google Translate's Back-translation to rewrite the generated MGT. The second is the mixed text, which randomly replaces a quarter of the original machine-generated text with human-written text, but its label remains in the "machine-generated" category.
\end{itemize}

\subsection{Baselines}
\label{sec: appendix_baselines}

To verify the effectiveness of the proposed strategy, we compare it with metric-based methods such as Likelihood, Log-Rank, Entropy, NPR, DetectGPT, and Fast-DetectGPT, as well as model-based methods such as ChatGPT-D, MPU, and RADAR.
\begin{itemize}[leftmargin=*]
    \item \textbf{Likelihood} \cite{solaiman2019release}. It employs LLM to quantify the log probability at the token level.  To derive a detection score for a given text, these individual token log probabilities are then averaged. Notably, a higher score suggests that the text is more likely to have been machine-generated.
    \item \textbf{Log-Rank} \cite{mitchell2023detectgpt}. It assigns a score to a text by averaging values derived from the log rank of each token. Specifically, for each token, based on the context that precedes it, a language model provides a rank for that word among its possible predictions. The logarithm of this predicted rank is then taken.  The text's score is the mean of these log-rank values across all words. It should be noted that a lower score indicates a higher probability of the text being machine-generated.
    \item \textbf{Entropy} \cite{gehrmann2019gltr}. Similar to the Log-Rank score, the Entropy score for a text is the average of the conditional entropy for each token, given its prior context.  It is worth noting that machine-generated texts are likely to have a lower Entropy score.
    \item \textbf{DetectGPT} \cite{mitchell2023detectgpt}. It assesses a text by quantifying how minor perturbations affect its log probability under the LLM. The core intuition posits that text generated by an LLM typically resides near local optima within the model's log probability landscape. Consequently, applying small alterations to model-generated text tends to result in a reduced log probability according to the model, compared to the original text. Conversely, applying similar minor perturbations to human-written text does not consistently lead to a decrease and may result in the log probability being either higher or lower than that of the original text.
    \item \textbf{NPR} \cite{su2023detectllm}. Similar to DetectGPT, the Normalized Perturbed Log-Rank (NPR) method also applies perturbations to the original text. The rationale behind NPR is that both MGTs and HGTs are susceptible to minor disturbances, which is indicated by an increase in their Log-Rank score after such perturbations. However, this effect is notably more pronounced in MGTs.
    \item \textbf{Fast-DetectGPT (FastGPT)} \cite{bao2024fast}. It reveals the limitation of DetectGPT's intensive computational cost, uses the conditional probability curvature metric to identify the difference in token selection between LLMs and humans in a given environment, and proposes to use a more efficient sampling step instead of the perturbation step of DetectGPT.
    \item \textbf{ChatGPT-D} \cite{guo2023close}. It is built upon a RoBERTa model that was fine-tuned using the HC3 dataset, and solely utilizes the pure answered text from the dataset. 
    \item \textbf{MPU} \cite{tianmultiscale}. The Multiscale Positive-Unlabeled (MPU) training framework attempts to solve the difficulty of short text detection without sacrificing long text. First, it considers the similarity between short machine text and human text, regards this part of short text as partially "unlabeled", and reformulates AI text detection as a partially positive unlabeled (PU) problem. Then, in this PU context, it uses a length-sensitive multi-scale PU loss to train the detection model.
    \item \textbf{RADAR} \cite{hu2023radar}. It trains a robust MGT detector through an adversarial learning strategy. Specifically, it includes a paraphraser and a detector, where the paraphraser aims to generate real content to evade AI text detection, while the detector tries to detect this part of the content. Both sides improve the robustness of the model in this adversarial training environment.
\end{itemize}

The implementation of these baseline methods is mainly based on the MGT detection benchmark \cite{he2023mgtbench}, while Fast-DetectGPT is based on the DetectRL benchmark \cite{wu2024detectrl}. In order to compare them fairly with our enhancement methods, we also fine-tune these model-based methods on the given dataset. The learning rate, training epochs and other parameter settings of the enhancement strategy are consistent with them, as shown in Appendix \ref{sec: appendix_implementation}.

\subsection{Evaluation Scenario}
\label{sec: scenario}
To comprehensively evaluate the effectiveness of the proposed framework, we conduct experiments in the following practical scenarios:
\begin{itemize}[leftmargin=*]
    \item \textbf{Cross-LLM}. We trained the detector on texts generated by one LLM and tested it on texts generated by various LLMs to evaluate its cross-LLM generalization performance. The main text presents our results of training on the DetectRL dataset based on PaLM and testing on other LLMs (see Table \ref{tab: detectrl_tpr}), as well as the results of training on the Essay dataset based on GPT4All (see Tables \ref{tab: essay_tpr}). Additionally, we provide the complete results of training and testing on texts generated by all LLMs in Tables \ref{tab: detectrl_auroc}-\ref{tab: essay_tpr_para_claude} in the Appendix. These experiments cover both sentence-level and paragraph-level granularity. For sentence-level experiments, we selected sentences with a token number of at least 5 as valid samples. For paragraph-level experiments, we used the original text without any modifications.
    \item \textbf{Cross-Domain}. The DetectRL dataset comprises four different domains: arXiv academic abstracts, XSum news articles, Writing Prompts stories, and Yelp Reviews. We use this dataset to evaluate the model's cross-domain performance by training the detector on texts from a source domain and testing it on texts from the remaining target domains. In this cross-domain evaluation setup, all MGTs are generated using the default PaLM model. Fig. \ref{fig: cross} and Fig. \ref{fig: cross_addition} present the heatmap that visually illustrates the relative performance improvements of the proposed framework when applied to various baseline detection models.
    \item \textbf{Paraphrase Attack}. It involves paraphrasing text to preserve its original meaning. We conduct experiments on the DetectRL dataset, which contains paraphrased data from Polish and Back Translation. As in the existing adversarial attack setting, we train on clean text and evaluate its robustness on paraphrased text. Specifically, the detector is trained on the clean PaLM texts and tested on the paraphrasing texts of various LLMs, with results shown in Fig. \ref{fig: paraphrasing_tpr} and Fig. \ref{fig: paraphrasing_auc}.
    \item \textbf{Mixed Text}. In the real world, text is often not purely generated by humans or machines independently, which makes mixed text very common in practice. The DetectRL dataset contains mixed text by replacing one quarter of the sentences in an LLM-generated text with human-written text at random. In this scenario, we perform two settings: (1) The detector was trained on original (non-mixed) text and tested on mixed text; and (2) The detector was trained and tested on mixed text. Fig. \ref{fig: all_mix_tpr} and Fig. \ref{fig: all_mix_auc} show the performance under mixed settings, where for each sub-figure, the left group denotes detectors trained on mixed texts, and the right group denotes detectors trained on original texts.
    
\end{itemize}

\subsection{Implementation Details}
\label{sec: appendix_implementation}

\textbf{Supervisor Implementation}. As a supervisor providing reliable guidance to the detector, we model it as a three-layer fully connected neural network in this work. Although it appears simple, this choice is made after in-depth consideration of many aspects:
\begin{itemize}[leftmargin=*]
    \item \textbf{Performance}. Firstly, providing reliable supervision signals depends on the supervisor having satisfactory performance itself. Based on this consideration, there is a tendency to select models with higher expressive capacity as the supervisor. However, the core idea of the proposed enhancement framework is to improve the quality of input data by maximizing the supervisor's performance, thereby promoting the improvement of the detector. If the supervisor's model is over-parameterized, it may prioritize adjusting its parameters to directly fit imperfect data, rather than effectively transmitting signals to the detector to improve data. This may weaken the enhancement effect. For example, if the number of original text segments that constitute a longer text is 4 (i.e., $k=4$), and the average accuracy of the current detector for these text segments is 50\%, this means that the supervisor's long text has only 2 text segments on average, which is defective. In this case, an over-parameterized supervisor model may directly fit this defective data through parameter memory, rather than improving this defective data through optimization (i.e., approaching the length of 4). Based on this consideration, the supervisor tends to be a simple model, which is inconsistent with the above considerations. Thus, as a compromise, we adopt a three-layer fully connected neural network. On the one hand, benefiting from the proposed two data quality enhancements (Theorem \ref{theorem: tv} and Theorem \ref{theorem: collapse}), even a three-layer network can achieve satisfactory performance, as shown in Appendix \ref{sec: sensitivity_supervisor}. On the other hand, the design of this simple model prevents the weakening of enhancement effects on the detector typically caused by over-parameterized models.
    \item \textbf{Efficiency}. As an enhancement framework, minimizing the introduced training delay is preferable. As discussed in Section \ref{sec: overall}, the longer text for the supervisor is constructed within the batch, and the intermediate results $f(x^{(j)},\theta_f)$ from the detector can be reused. This ensures negligible overhead during the data preparation phase. The primary training delay is the supervisor model's forward and backward passes. This implies that simpler models can achieve reduced training delays. Therefore, we chose a three‐layer fully connected neural network, whose training delay is almost negligible, as shown in Table \ref{tab: running_time_para} and Table \ref{tab: running_time_sent}.
\end{itemize}
In summary, using a three‐layer fully connected neural network as the supervisor is a comprehensive consideration of performance and efficiency. In the specific implementation of the supervisor, the size of the three hidden layers are 256, 64, and 2, respectively. The input of the supervisor needs to convert the text into the embedding, which is obtained by the tokenizer of the detector and using the embedding layer of the detector. Let $e(x)$ represent the embedding of text $x$, then Eq. \ref{eq: input} can be rewritten as:
\begin{equation}
    e(x'')=\oplus_{j=1}^k \left(e(x)^{(j)}\odot \operatorname{Gumbel}(f(e(x)^{(j)},\theta_f))\right).
    \nonumber
\end{equation}
Here, the text slicing operator $\oplus$ is a vector concatenation operation.

\textbf{Hyperparameter Setting}.
We fixed five different random seeds (1-5) and conducted five independent repeat experiments. For the DetectRL and Essay datasets, we randomly selected 10\% of the data as the training set, with the remaining 90\% evenly divided into validation and test sets. To ensure a fair comparison with baseline models, the enhanced models used the same hyperparameters as their corresponding base models. Specifically, all models were fine-tuned for 5 epochs, with a batch size set to 32. Regarding learning rates, we set 5e-6 for relatively smaller models like ChatGPT-D and MPU. For the larger RADAR model, we found that a learning rate of 5e-6 led to unstable training, so a smaller learning rate of 1e-6 was chosen. For supervisor-related hyperparameters, the default settings are as follows: the number of texts in longer texts ($k = 3$), the number of longer texts per batch ($N'=128$), and the weight ($\lambda = 10$). For performance analysis under more hyperparameter settings, please refer to Appendix \ref{sec: appendix_sensitivity}.

\subsection{More Results of Boundary Fuzziness}
\label{sec: appendix_motivation}

Continuing the discussion on the blurred boundaries between MGT and HGT mentioned in the introduction, Fig. \ref{fig: motivation_boundary1}, Fig. \ref{fig: motivation_boundary2}, and Fig. \ref{fig: motivation_boundary3} further show the boundary fuzziness evaluation results on more LLM texts. To enhance the persuasiveness of the visualization results, the analyses characterizing the latent space distribution and prediction confidence distribution are based on the RADAR \cite{hu2023radar}, which performed best in our experiments. The analysis results across various LLMs consistently indicate that there is a general blurriness in the boundary between MGT and HGT.

Continuing the discussion in the introduction about the inexactness of hard-label-based training, Fig. \ref{fig: motivation_soft1} further presents experimental results on mixed texts of additional LLMs. It can be observed that in most settings, even the simple application of label smoothing can improve detection performance. This result also indicates that traditional hard-label-based learning may be inexact.

\begin{figure*}[t]
	\centering
	\includegraphics[width=1.\linewidth]{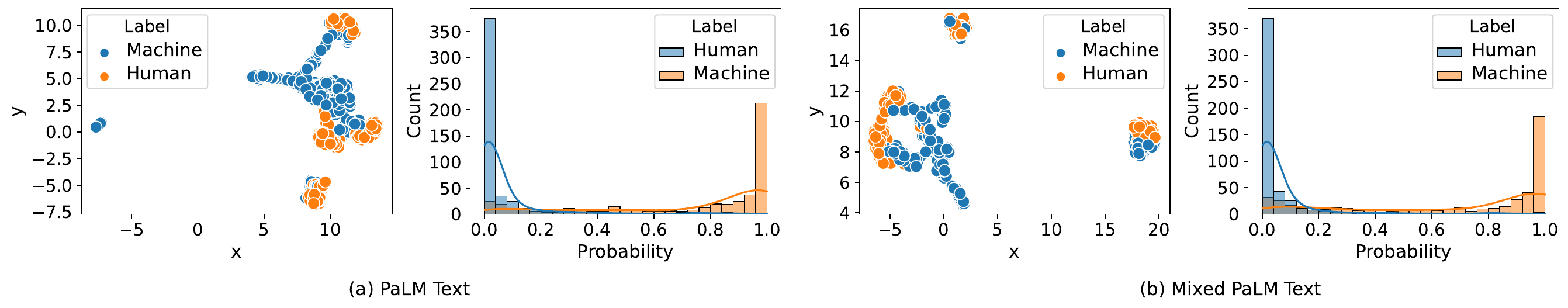}
    \vspace{-0.4cm}
	\caption{Boundary fuzziness evaluation between MGT (PaLM) and HGT, which illustrates the latent space distribution and prediction confidence distribution under pure (Sub-Fig. 1 \& 2) and mixed texts (Sub-Fig. 3 \& 4). The mixed text is to replace 1/4 of MGT with HGT.}
	\label{fig: motivation_boundary1}
\end{figure*}

\begin{figure*}[t]
	\centering
	\includegraphics[width=1.\linewidth]{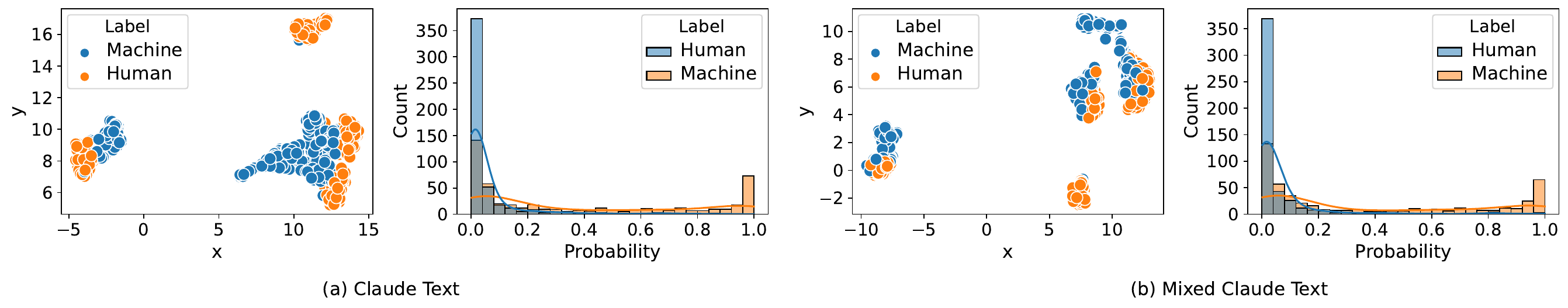}
    \vspace{-0.4cm}
	\caption{Boundary fuzziness evaluation between MGT (Claude) and HGT, which illustrates the latent space distribution and prediction confidence distribution of pure texts (Sub-Figure 1 \& 2) and mixed texts (Sub-Figure 3 \& 4). The mixed text is to replace 1/4 of MGT with HGT.}
	\label{fig: motivation_boundary2}
\end{figure*}

\begin{figure*}[t]
	\centering
	\includegraphics[width=1.\linewidth]{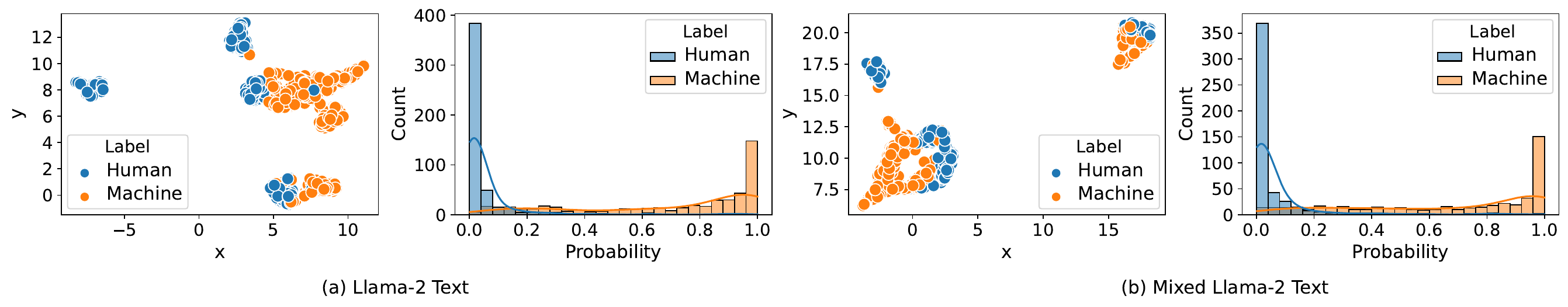}
    \vspace{-0.4cm}
	\caption{Boundary fuzziness evaluation between MGT (Llama-2) and HGT, which illustrates the latent space distribution and prediction confidence distribution of pure texts (Sub-Figure 1 \& 2) and mixed texts (Sub-Figure 3 \& 4). The mixed text is to replace 1/4 of MGT with HGT.}
	\label{fig: motivation_boundary3}
\end{figure*}

\begin{figure*}[t]
	\centering
	\includegraphics[width=1.\linewidth]{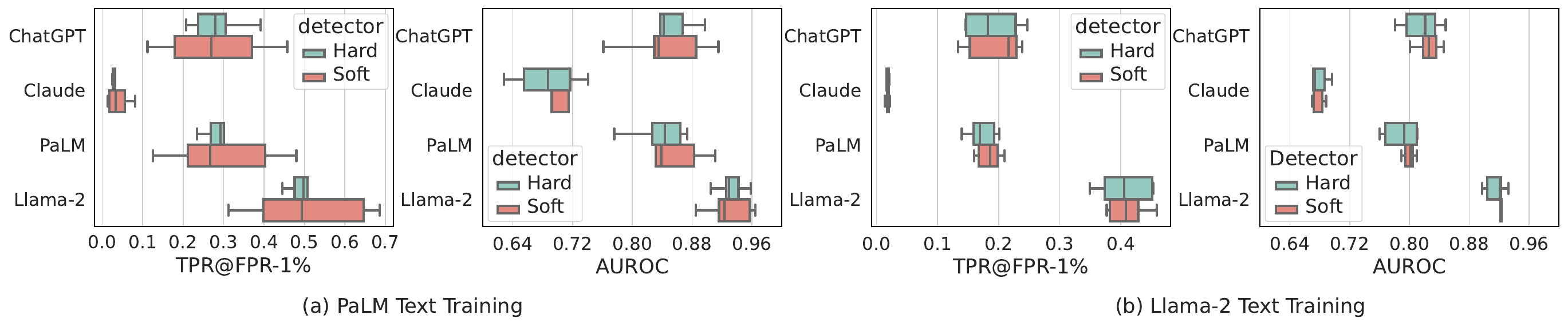}
    \vspace{-0.4cm}
	\caption{Performance comparison with and without using soft labels in mixed text (1/4 of MGT was replaced with HGT). The detector is ChatGPT-D \cite{guo2023close}.}
	\label{fig: motivation_soft1}
\end{figure*}

\subsection{More Performance Comparisons}

\textbf{Sentence-level Detection}. Continuing the sentence-level detection setting discussed in Section \ref{sec: comparison}, Tables \ref{tab: detectrl_auroc}-\ref{tab: detectrl_tpr_llama} (left) and Tables \ref{tab: essay_auroc_sent}-\ref{tab: essay_tpr_sent_claude} provide a detailed comparison of the performance of various detectors trained on texts generated by different LLMs in the DetectRL and Essay datasets, respectively. From these tables, it can be observed that the experimental results are consistent with the main conclusions in the main text. Notably, in a total of 312 cross-LLM detection settings, the proposed enhancement strategy outperformed the corresponding baseline models in 87\% of the cases. This widespread and consistent improvement highlights the general applicability and practical value of the proposed enhancement framework.

\textbf{Paragraph-level Detection}. Continuing the paragraph-level detection setting in Section \ref{sec: comparison}, Table \ref{tab: detectrl_auroc}-\ref{tab: detectrl_tpr_llama} (right) shows the performance comparison of detectors trained with various LLMs in the DetectRL dataset, while Table \ref{tab: essay_auroc_para}-\ref{tab: essay_tpr_para_claude} shows the performance comparison in the Essay dataset. We can also get the same conclusion as the main text. In addition, similar to the sentence-level setting, under the 312 cross-LLM settings, the proposed enhancement strategy outperforms the basic model in 85\% of the settings. This extensive enhancement effect is very valuable.

\textbf{Mixed Text Detection}. Continuing the mixed detection settings from Section \ref{sec: comparison}, Fig. \ref{fig: all_mix_auc} presents a performance comparison of various methods on mixed texts based on the AUROC metric. Similarly, compared to the performance improvement on original texts (see the right part of Table \ref{tab: detectrl_auroc}), the proposed method demonstrates a more significant enhancement in detecting mixed texts, highlighting the rationality of our design.

\textbf{Paraphrasing Text Detection}. Section \ref{sec: comparison} has demonstrated robustness enhancement against paraphrase attacks in terms of TPR@FPR-1\%. Here we supplement with Fig. \ref{fig: paraphrasing_auc}, showing AUROC performance under the same attack settings. Similarly, the AUROC metric also indicates that the proposed strategy enhances the robustness of the original detection models.

\textbf{Cross Domain Detection}. In addition to the cross-domain performance on RADAR shown in Section \ref{sec: comparison}, Fig. \ref{fig: cross_addition} illustrates the proposed strategy's enhancement of cross-domain performance for ChatGPT-D and MPU. Similar findings can be observed, where the enhanced versions exhibit better cross-domain generalization capabilities in most settings.

\textbf{Test on Newer LLMs}. Based on the Essay prompts, we employed the latest LLMs—GPT-4o, GPT-4.1, DeepSeek-R1, and Llama4 Maverick—to generate machine text. The detection results are shown in Table \ref{tab: newer_llms}, where detectors are trained on GPT4All texts from the Essay dataset. As can be seen, our proposed enhancement strategy remains significantly effective on these newer LLMs. Moreover, compared to the LLMs used in Table \ref{tab: essay_tpr}, the detection performance on these more advanced LLMs has declined, underscoring the ongoing arms race between detection and large-model development: detectors design stronger strategies based on current generation models, and then newly emerging LLMs produce higher-quality text that is harder to distinguish, thus driving further progress on both sides.

\textbf{Running Time}. Table \ref{tab: running_time_sent} presents a comparison of the runtime (including inference and training time) of various detection models under the sentence-level setting. Similar to the paragraph-level setting, the additional training delay introduced by the enhancement strategy at the sentence level is minimal. This is primarily due to the lightweight design of the supervisor model and the method of constructing long text data within batches during training.

\begin{table}[t]
\scriptsize
  \centering
  \caption{Performance concerning AUROC (\%) on DetectRL. The detection model is trained on text generated by PaLM.}
  \begin{adjustbox}{width=1.\textwidth}
  \setlength{\tabcolsep}{0.01\linewidth}{
%
    }
    \end{adjustbox}
  \label{tab: essay_tpr_para_claude}%
\end{table}%

\begin{figure*}[t]
	\centering
	\includegraphics[width=1.\linewidth]{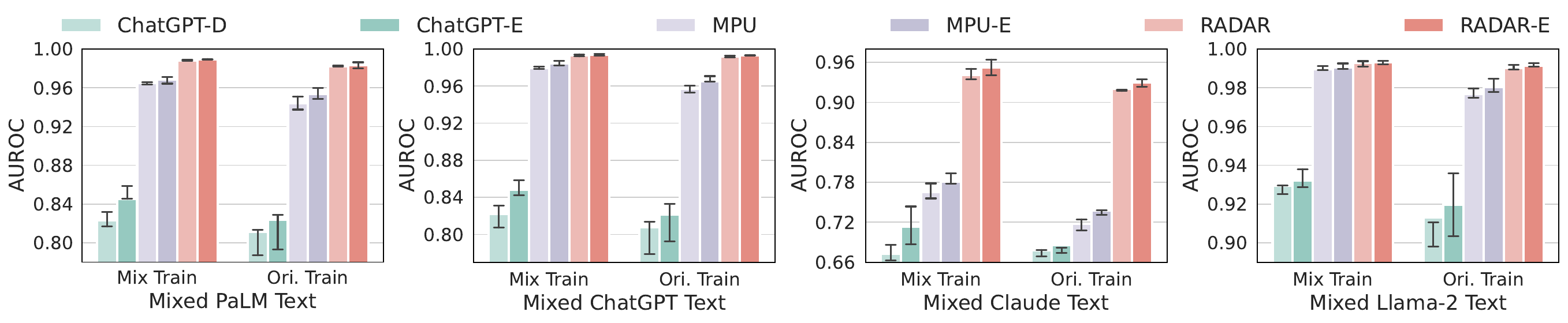}
    \vspace{-0.4cm}
	\caption{Test performance (AUROC) under various LLM mixed texts. Detectors are trained on text generated by PaLM. For each sub-figure, the left group: detectors are trained on mixed text, and the right group: detectors are trained on original text.}
	\label{fig: all_mix_auc}
\end{figure*}

\begin{figure*}[t]
	\centering
	\includegraphics[width=1.\linewidth]{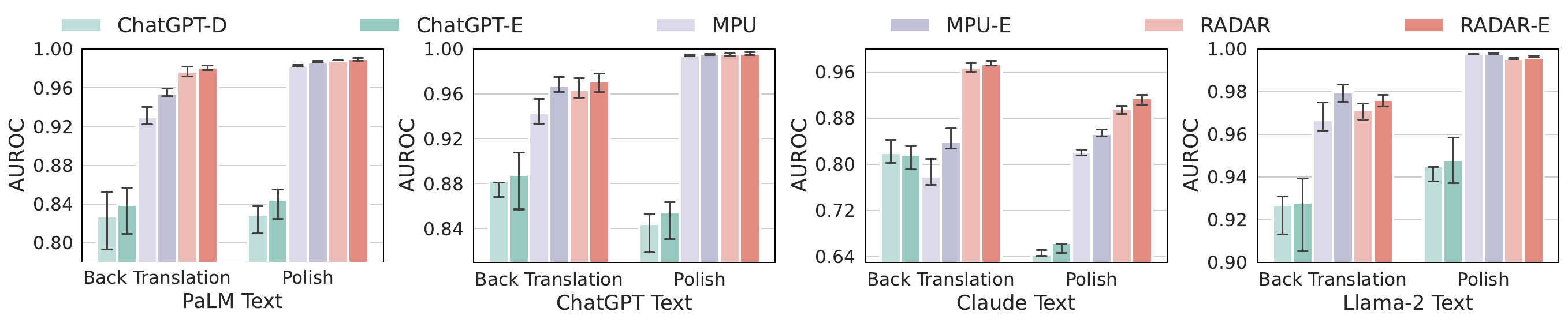}
    \vspace{-0.4cm}
	\caption{Robustness (AUROC) against paraphrasing attacks (Back Translation and Polish). Detectors are trained on the PaLM texts and tested on the paraphrasing texts of various LLMs.}
	\label{fig: paraphrasing_auc}
\end{figure*}

\begin{figure*}[t]
	\centering
	\includegraphics[width=1.\linewidth]{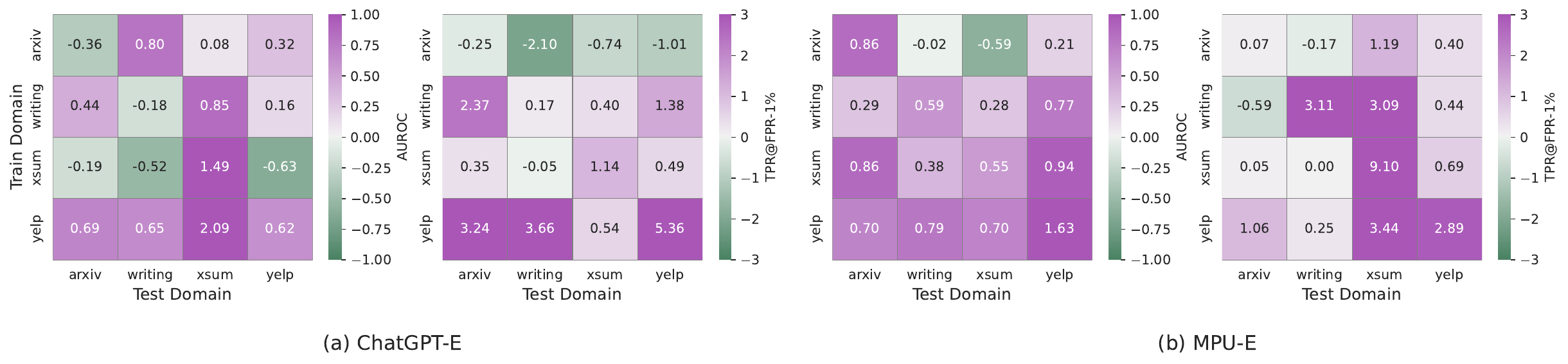}
    \vspace{-0.4cm}
	\caption{The performance improvement of the enhanced ChatGPT-E and MPU-E compared to ChatGPT-D and MPU. The figure shows their differences for AUROC and TPR@FPR-1. Positive values indicate performance improvement.}
	\label{fig: cross_addition}
    \vspace{-0.2cm}
\end{figure*}

\begin{table}[htbp]
  \centering
  \caption{Performance on newer LLMs. The detection model is trained on text generated by GPT4All.}
  \begin{adjustbox}{width=1.\textwidth}
  \setlength{\tabcolsep}{0.019\linewidth}{
    \begin{tabular}{c|ccccc|ccccc}
    \toprule
    \multirow{2}[0]{*}{Method} & \multicolumn{5}{c|}{TPR@FPR-1\%}       & \multicolumn{5}{c}{AUROC} \\
    \noalign{\smallskip} \cline{2-11}\noalign{\smallskip} 
    \multicolumn{1}{c}{} & \multicolumn{1}{c}{GPT-4o} & \multicolumn{1}{c}{GPT-4.1} & \multicolumn{1}{c}{DeepSeek-R1} & \multicolumn{1}{c}{Llama4 Maverick} & \multicolumn{1}{c|}{Avg.} & \multicolumn{1}{c}{GPT-4o} & \multicolumn{1}{c}{GPT-4.1} & \multicolumn{1}{c}{DeepSeek-R1} & \multicolumn{1}{c}{Llama4 Maverick} & \multicolumn{1}{c}{Avg.} \\
    \midrule
    ChatGPT-D & 2.98  & 1.20  & 0.00  & 45.87  & 12.51  & 54.94  & 36.48  & 9.37  & 95.48  & 49.07  \\
    \rowcolor{gray!20}\textbf{ChatGPT-E} & \textbf{3.47 } & \textbf{1.33 } & 0.00  & \textbf{47.91 } & \textbf{13.18 } & \textbf{57.68 } & \textbf{38.14 } & \textbf{12.00 } & \textbf{95.90 } & \textbf{50.93 } \\
    MPU   & 6.76  & 3.07  & \textbf{0.13 } & 59.60  & 17.39  & 76.37  & 63.57  & 40.98  & 97.20  & 69.53  \\
    \rowcolor{gray!20}\textbf{MPU-E} & \textbf{8.53 } & \textbf{3.64 } & 0.09  & \textbf{62.18 } & \textbf{18.61 } & \textbf{79.02 } & \textbf{66.81 } & \textbf{44.84 } & \textbf{97.54 } & \textbf{72.05 } \\
    RADAR & 66.67  & 50.53  & 55.78  & 86.09  & 64.77  & 98.79  & 97.27  & 98.55  & 99.50  & 98.53  \\
    \rowcolor{gray!20}\textbf{RADAR-E} & \textbf{77.47 } & \textbf{63.16 } & \textbf{68.53 } & \textbf{91.56 } & \textbf{75.18 } & \textbf{99.18 } & \textbf{98.09 } & \textbf{99.01 } & \textbf{99.67 } & \textbf{98.99 } \\
    \bottomrule
    \end{tabular}%
    }
    \end{adjustbox}
  \label{tab: newer_llms}%
\end{table}%

\begin{table}[t]
  \centering
  \caption{Running time under sentence-level setting.}
    \begin{tabular}{c|cc|cc}
    \hline\noalign{\smallskip} 
    \multirow{2}[0]{*}{Sentence-level} & \multicolumn{2}{c|}{Training Time} & \multicolumn{2}{c}{Test Time} \\
     \noalign{\smallskip} \cline{2-5}\noalign{\smallskip} 
          & Essay & DetectRL & Essay & DetectRL \\
          \noalign{\smallskip} \hline\noalign{\smallskip} 
    Likelihood & 42.8 & 47.5 & 192.3 & 213.8 \\
    Log-Rank & 43.9 & 47.8  & 197.6 & 215.1 \\
    Entropy & 42.6 & 47.0 & 190.5 & 213.3 \\
    NPR   & 1485.9 & 2396.7 & 6680.3  & 10774.8  \\
    DetectGPT & 1769.6 & 2306.0 & 7955.6  & 10367.1  \\
    FastGPT & 106.7 & 116.2 & 480.0  & 522.5  \\
    \noalign{\smallskip} \hline\noalign{\smallskip}
    ChatGPT-D & 28.7 & 29.5 & 5.9  & 6.3 \\
    \textbf{ChatGPT-E} & 30.9 & 30.9 & 5.9  & 6.3 \\
    MPU   & 27.3 & 29.6 & 5.8  & 6.3 \\
    \textbf{MPU-E} & 29.8 & 30.1 & 5.8  & 6.3 \\
    RADAR & 76.7 & 78.2 & 16.9 & 18.4 \\
    \textbf{RADAR-E} & 78.4 & 81.6 & 17.1 & 18.3 \\
    \noalign{\smallskip} \hline
    \end{tabular}%
  \label{tab: running_time_sent}%
\end{table}%

\subsection{Performance under Noisy Labels}
Our framework assumes that hard labels are correct by default. To this end, this section verifies the detection performance in the presence of noisy labels. We randomly flipped 10\% of the labels and then evaluated detectors trained on these noisy data, as shown in Fig. \ref{fig: noise}. First, we found that the proposed enhanced framework remained effective. Second, compared to the noiseless results in Table \ref{tab: detectrl_tpr}, our method is generally less affected by noise, verifying our mitigation efforts.

\begin{figure*}[t]
	\centering
	\includegraphics[width=1.\linewidth]{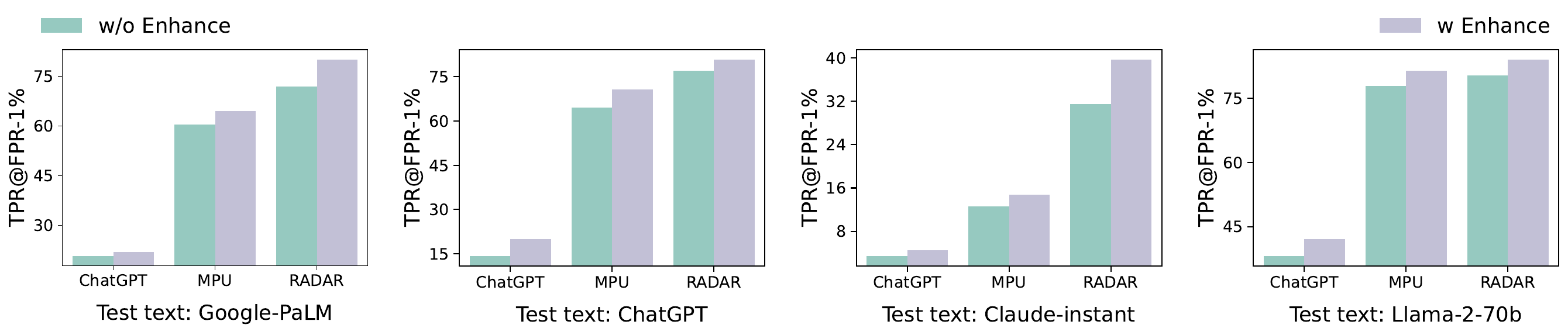}
    \vspace{-0.4cm}
	\caption{The Impact of noise label regarding TPR-FPR-1\% on DetectRL dataset.}
	\label{fig: noise}
\end{figure*}

\subsection{Comparison with Noisy Label Learning}
\label{sec: nll_experiment}

In addition to discussing the challenges of noisy label learning (NLL) techniques in Appendix \ref{sec: discussion_nll}, we also evaluated them experimentally. We selected two typical NLL methods: Co-teaching \cite{han2018co} and SAM \cite{foretsharpness}, applying them to the ChatGPT-D baseline model for evaluation, denoted as ChatGPT-Co and ChatGPT-SAM, respectively. The performance comparison is shown in Fig. \ref{fig: all_nll}. It can be observed that the direct application of these NLL techniques not only failed to improve performance but, in most cases, even led to a decline in detector performance. This is mainly because the core objective of NLL techniques is to identify and correct limited erroneous labels in training data. However, in the context of MGT detection, the labels themselves are correct; the issue lies in widespread labeling inexactness. Therefore, NLL strategies designed based on the assumption of erroneous labels do not align with the nature of imperfect labels (ambiguity) in MGT detection, making them difficult to effectively apply to tasks aimed at enhancing MGT detection.

\begin{figure*}[t]
	\centering
	\includegraphics[width=1.\linewidth]{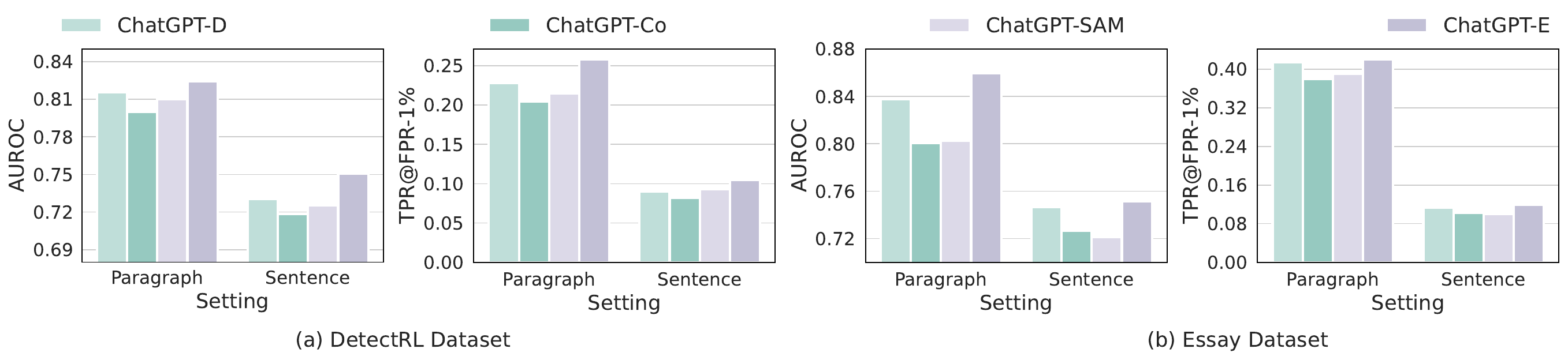}
    \vspace{-0.4cm}
	\caption{Performance comparison (AUROC and TPR@FPR-1\%) with NLL methods. The version applying Co-teaching and SAM to ChatGPT-D is denoted as ChatGPT-Co and ChatGPT-SAM. The supervisor is trained on PaLM texts on DetectRL and GPT4All texts on Essay.}
	\label{fig: all_nll}
\end{figure*}

\subsection{Sensitivity Analysis}
\label{sec: appendix_sensitivity}

\subsubsection{About the Supervisor}
\label{sec: sensitivity_supervisor}

\textbf{Sensitivity w.r.t. Original Text Number $k$ for Longer Text}. A core idea in the design of the supervisor is to use longer texts to enhance data quality. Theorem \ref{theorem: tv} theoretically demonstrates that this strategy can lead to better feature discrimination, thereby mitigating the negative effects of hard labels and fostering effective learning. To empirically verify these theoretical findings, we analyze the impact of the number of original texts constituting longer texts (i.e., the value of $k$) on supervisor performance, as shown in Fig. \ref{fig: ablation_sentence_para}. Experimental results align with theoretical predictions: increasing the number of original text segments $k$ that constitute longer texts helps improve the supervisor's learning performance.

\textbf{Sensitivity w.r.t. the Number of Longer Texts $N'$ Per Batch}. Increasing the number of longer texts processed by the supervisor (reflected in the number of longer texts per batch, $N'$) should enhance its detection capability. To investigate this, we experimented with different settings of $N'$ to evaluate the supervisor's performance, as shown in Fig. \ref{fig: ablation_paragraph_para}. The experimental results meet expectations: as the value of $N'$ increases, the detection performance of the supervisor indeed improves.

\textbf{Sensitivity w.r.t. Supervisor Loss Coefficient $\lambda$}. In the joint training process shown in Eq. \ref{eq: loss_all}, the weight $\lambda$ of the supervisor loss term directly determines the emphasis on supervisor training within the overall optimization objective. To explore its impact on the supervisor's performance, we experimented with different $\lambda$ settings to evaluate the supervisor's performance, as illustrated in Fig. \ref{fig: ablation_reg_para}. it can be observed that in the relatively weaker ChatGPT-D, the supervisor requires a larger $\lambda$. In contrast, for the relatively stronger MPU and RADAR, the supervisor's performance initially improves with an increase in $\lambda$ value but reaches saturation and does not show significant further improvement. This might be related to the implementation of the supervisor, which, as explained in Appendix \ref{sec: appendix_implementation}, uses the input embeddings from the detector as its embeddings. In the stronger MPU and RADAR, the higher quality of embeddings provides a better initialization for the supervisor's learning, alleviating the overly large focus on the supervisor.

\begin{figure*}[t]
	\centering
	\includegraphics[width=1.\linewidth]{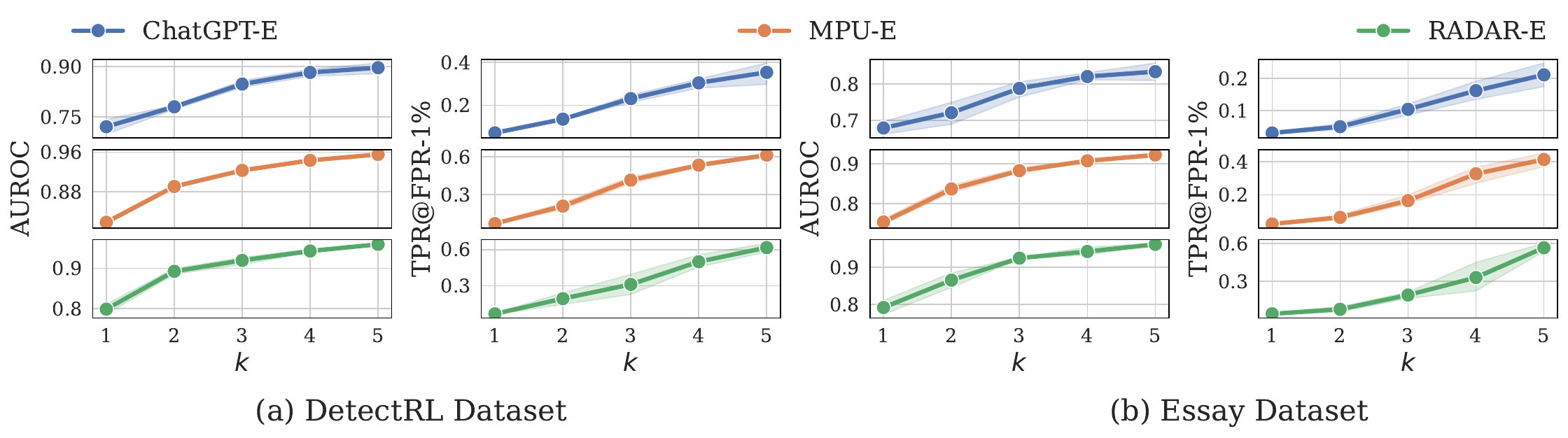}
    \vspace{-0.4cm}
	\caption{Performance (AUROC and TPR@FPR-1\%) of the supervisor under different text number $k$ for longer text. The supervisor is trained on PaLM texts on DetectRL and GPT4All texts on Essay.}
	\label{fig: ablation_sentence_para}
\end{figure*}

\begin{figure*}[t]
	\centering
	\includegraphics[width=1.\linewidth]{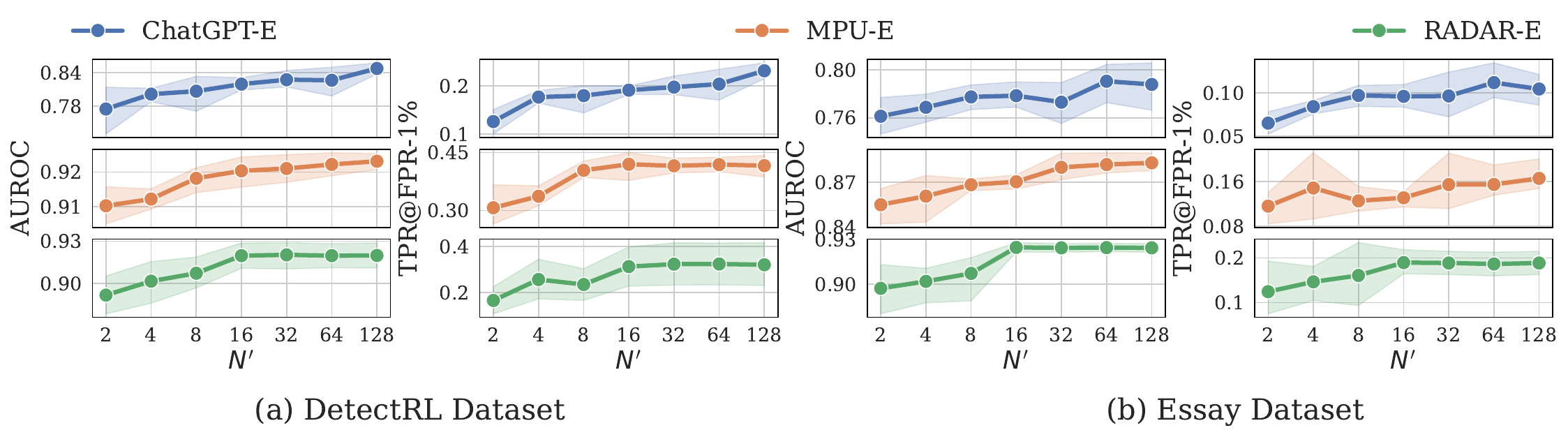}
    \vspace{-0.4cm}
	\caption{Performance (AUROC and TPR@FPR-1\%) of the supervisor under different longer text numbers per batch. The supervisor is trained on PaLM texts on DetectRL and GPT4All texts on Essay.}
	\label{fig: ablation_paragraph_para}
\end{figure*}

\begin{figure*}[t]
	\centering
	\includegraphics[width=1.\linewidth]{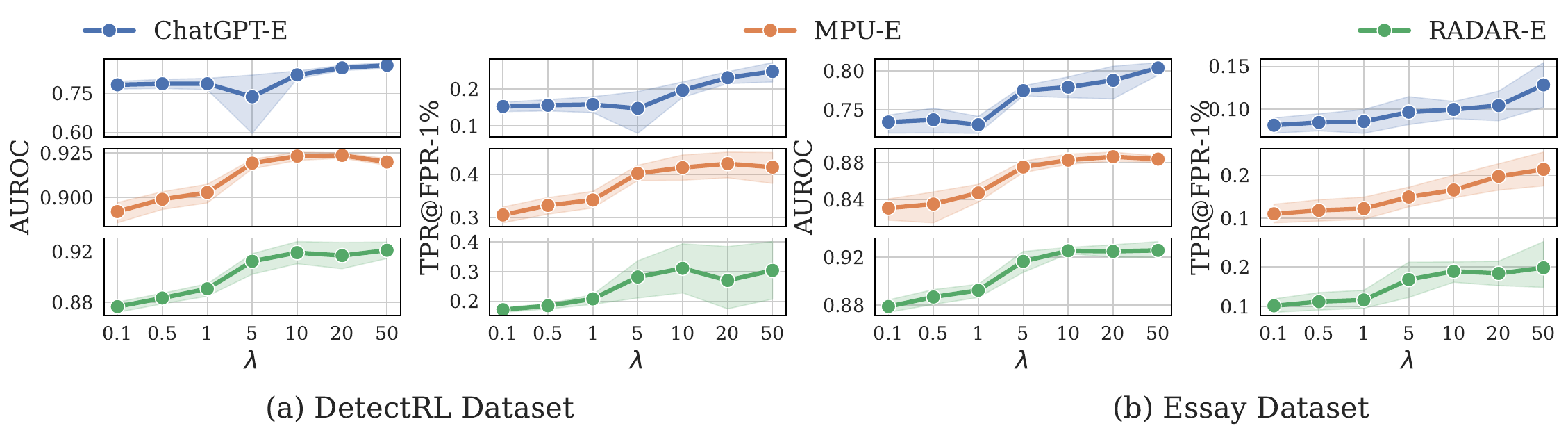}
    \vspace{-0.4cm}
	\caption{Performance (AUROC and TPR@FPR-1\%) of the supervisor under different supervision loss coefficient $\lambda$. The supervisor is trained on PaLM texts on DetectRL and GPT4All texts on Essay.}
	\label{fig: ablation_reg_para}
\end{figure*}

\subsubsection{About the Detector}
\label{sec: sensitivity_detector}

\textbf{Sensitivity w.r.t. Original Text Number $k$ for Longer Text}. According to our theoretical results (Theorem \ref{theorem: tv}), longer text lengths help achieve greater distribution distance for text data, thereby simplifying the supervisor's learning difficulty and laying the foundation for providing reliable supervision to the detector (empirically proved in Fig. \ref{fig: ablation_sentence_para}). To this end, we explore the impact of different original text numbers $k$ for longer text on detector performance, as shown in Fig. \ref{fig: ablation_sentence}. The setup with $k=0$ represents the original detector baseline without using the enhancement strategy. The experimental results align with the theoretical predictions: using longer texts for supervised learning enhances the supervisor's performance (Fig. \ref{fig: ablation_sentence_para}), thereby providing more reliable supervisory signals and ultimately improving detector performance.

\textbf{Sensitivity w.r.t. the Number of Longer Texts $N'$ Per Batch}. The performance of the supervisor significantly impacts the target detector's performance, and the supervisor's own effectiveness largely depends on the amount of longer texts (proved in Fig. \ref{fig: ablation_paragraph_para}). Here we aim to explore the impact of varying quantities of longer text data on the performance of the detector, as shown in Fig. \ref{fig: ablation_paragraph}. $N'=0$ represents the original model without enhancement. The results are as expected: as the amount of long text data used for training the supervisor increases, the detector's learning is better supported, leading to improved performance. This can also be seen from the consistent trend of changes in Fig. \ref{fig: ablation_paragraph_para} and \ref{fig: ablation_paragraph}. Although increasing the data volume might introduce additional computational overhead, as indicated in our previous runtime analysis, even with relatively large data settings (e.g., N=128), the additional training delay remains minimal.

\textbf{Sensitivity w.r.t. Supervisor Loss Coefficient $\lambda$}. We also investigated the impact of the supervisor's loss term coefficient $\lambda$ on detector performance, as illustrated in Fig. \ref{fig: ablation_reg}. We can also find that the detector's performance change curve is consistent with the change of the supervisor (Fig. \ref{fig: ablation_reg_para}), which also indirectly emphasizes the guiding role of the supervisor on the detector.

\begin{figure*}[t]
	\centering
	\includegraphics[width=1.\linewidth]{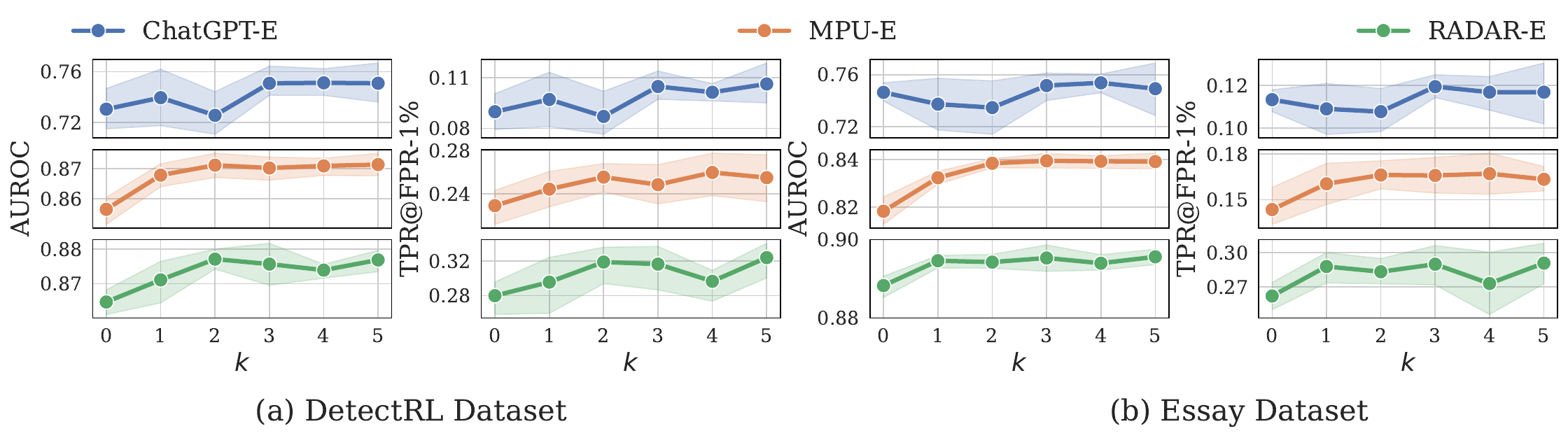}
    \vspace{-0.4cm}
	\caption{Performance (AUROC and TPR@FPR-1\%) of the enhanced detectors under different text number $k$ for longer text. The detector is trained on PaLM texts on DetectRL and GPT4All texts on Essay.}
	\label{fig: ablation_sentence}
\end{figure*}

\begin{figure*}[t]
	\centering
	\includegraphics[width=1.\linewidth]{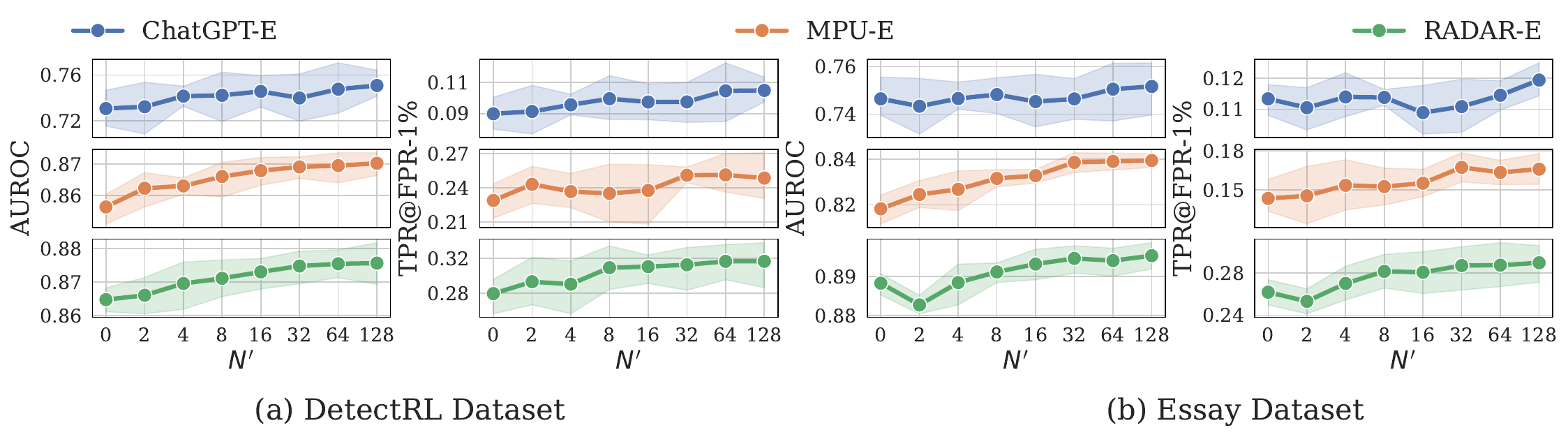}
    \vspace{-0.4cm}
	\caption{Performance (AUROC and TPR@FPR-1\%) of the enhanced detectors under different longer text numbers per batch. $N'=0$ represents the original model without enhancement. The detector is trained on PaLM texts on DetectRL and GPT4All texts on Essay.}
	\label{fig: ablation_paragraph}
\end{figure*}

\begin{figure*}[t]
	\centering
	\includegraphics[width=1.\linewidth]{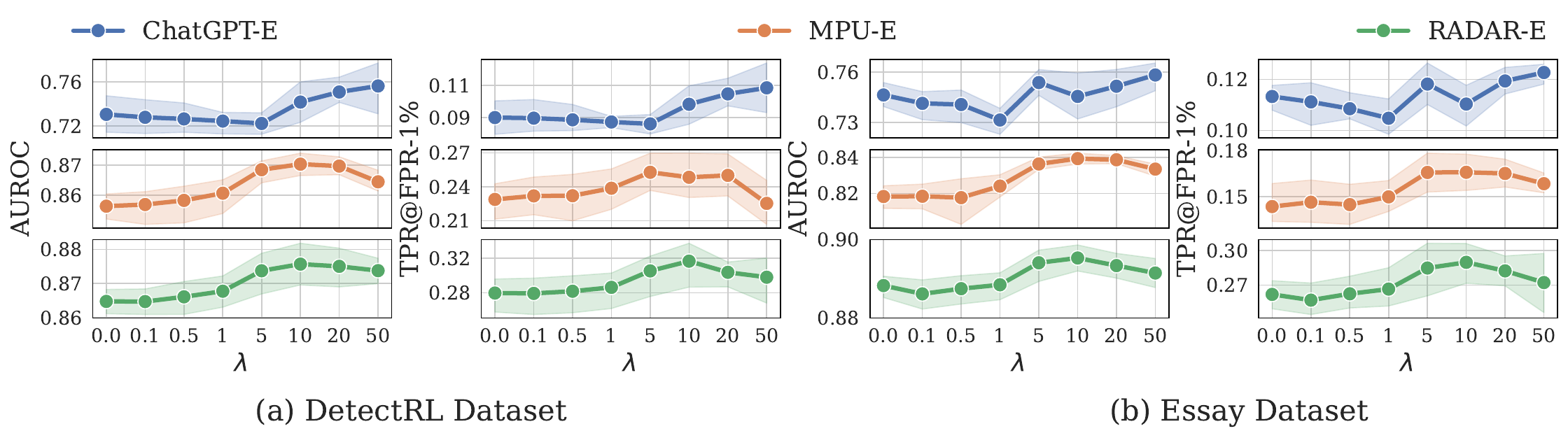}
    \vspace{-0.4cm}
	\caption{Performance (AUROC and TPR@FPR-1\%) of the enhanced detectors under different supervision loss coefficient $\lambda$. The detector is trained on PaLM texts on DetectRL and GPT4All texts on Essay.}
	\label{fig: ablation_reg}
\end{figure*}

\subsection{Ablation Study of Supervisor}

\begin{figure*}[t]
	\centering
	\includegraphics[width=1.\linewidth]{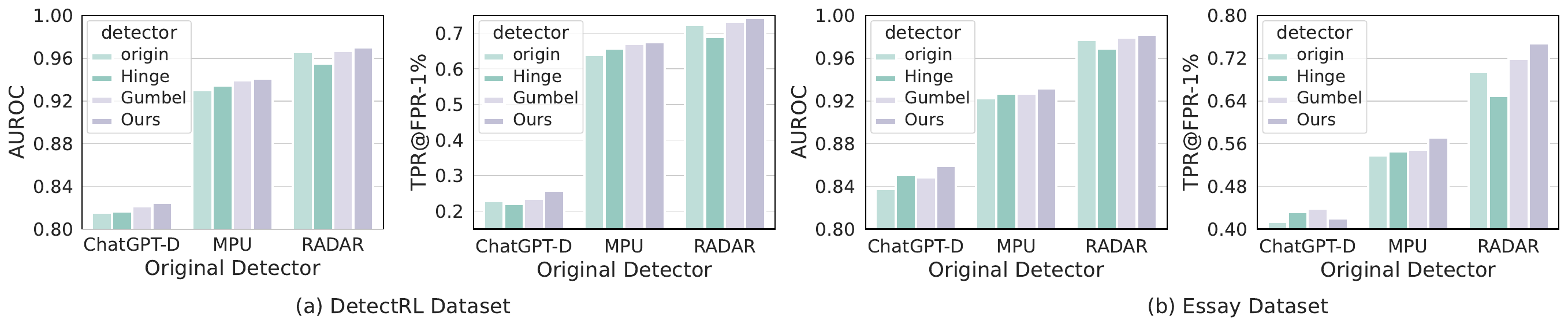}
    \vspace{-0.4cm}
	\caption{Performance comparison with category-based supervision signals (Hinge and Gumbel). The detector is trained on PaLM texts on DetectRL and GPT4All texts on Essay.}
	\label{fig: gumbel}
\end{figure*}

In our framework design, we aim to encourage the detector to correctly classify each sample within longer texts, rather than strictly requiring the predicted probability of the correct class to approach 1. This raises the question of whether directly using the loss function only focused on the class could achieve similar goals. Therefore, we conduct an ablation study using two class-only loss functions to highlight the role of the supervisor in enhancement. 

First, we define the training loss of the baseline model after removing the supervisor module as follows:
\begin{equation}
    \mathcal{L}_{supv.}=-\frac{1}{kN'} \sum_{i=1}^{N'}\sum_{j=1}^{k}\left( y_{long,i} \log \operatorname{gumbel}(f(x_i^{(j)}, \theta_f))+(1-y_{long,i}) \log (1-\operatorname{gumbel}(f(x_i^{(j))}, \theta_f))\right).
    \nonumber
\end{equation}
To ensure that $\operatorname{gumbel}(f(x_i^{(j)}, \theta_f))$ is meaningful, the non-hard-label version of Gumbel-Softmax is used. 

Second, we use Hinge loss to replace $\mathcal{L}_{supv.}$ as follows,
\begin{equation}
    \mathcal{L}_{supv.}=\frac{1}{kN'} \sum_{i=1}^{N'}\sum_{j=1}^{k}\max(0,-(y_{long,i}*2-1)*(f(x_i^{(j)},\theta_f)*2-1)).
    \nonumber
\end{equation}
These two variants are defined as Gumbel and Hinge. Aside from the form of the supervisory signal, all other experimental settings (such as $k$, $N'$, $\lambda$) remain consistent. The experimental results are shown in Fig. \ref{fig: gumbel}. It can be observed that using these alternative loss functions focused solely on class can enhance detection performance in some settings, yet there are instances of instability. For example, applying the Hinge loss to the RADAR model on the Essay dataset resulted in a performance decline. Furthermore, even though these alternatives provided performance improvements in certain settings, their enhancement effects were generally inferior to our proposed strategy. This is because our strategy not only focuses on correct classification but, more importantly, guides the model's predicted probabilities to approximate the underlying true labels through the supervisor's signal (see Theorem \ref{theorem: golden}), offering richer supervisory information.

\end{document}